\theoremstyle{plain}            
\newtheorem{theorem}{Theorem}[section] 
\definecolor{darkred}{rgb}{0.55, 0, 0}
\lstdefinelanguage{prompt}{
    basicstyle=\ttfamily,
    breaklines=true,
    columns=fullflexible,
    mathescape=false,
    escapeinside={(@}{@)},
    breakindent=0pt,
    postbreak=\mbox{\tiny\textcolor{darkred}{$\hookrightarrow$}\space},
}
\colorlet{punct}{red!60!black}
\definecolor{background}{HTML}{EEEEEE}
\definecolor{delim}{RGB}{20,105,176}
\colorlet{numb}{magenta!60!black}
\lstdefinelanguage{json}{
    basicstyle=\ttfamily,
    stepnumber=1,
    numbersep=8pt,
    showstringspaces=false,
    breaklines=true,
    postbreak=\mbox{\tiny\textcolor{darkred}{$\hookrightarrow$}\space},
    literate=
     *{0}{{{\color{numb}0}}}{1}
      {1}{{{\color{numb}1}}}{1}
      {2}{{{\color{numb}2}}}{1}
      {3}{{{\color{numb}3}}}{1}
      {4}{{{\color{numb}4}}}{1}
      {5}{{{\color{numb}5}}}{1}
      {6}{{{\color{numb}6}}}{1}
      {7}{{{\color{numb}7}}}{1}
      {8}{{{\color{numb}8}}}{1}
      {9}{{{\color{numb}9}}}{1}
      {:}{{{\color{punct}{:}}}}{1}
      {,}{{{\color{punct}{,}}}}{1}
      {\{}{{{\color{delim}{\{}}}}{1}
      {\}}{{{\color{delim}{\}}}}}{1}
      {[}{{{\color{delim}{[}}}}{1}
      {]}{{{\color{delim}{]}}}}{1}
      {\{input\}}{{{\color{blue}{\{input\}}}}}{6}
}
\theoremstyle{plain}
\newtheorem{lemma}[theorem]{Lemma}
\theoremstyle{definition}
\newtheorem{definition}[theorem]{Definition}
\newtheorem{assumption}[theorem]{Assumption}
\theoremstyle{remark}
\newcommand{\rand}{\texttt{Rand}\xspace}
\newcommand{\topk}{\texttt{TopK}\xspace}
\newcommand{\diversity}{\texttt{Div}\xspace}
\newcommand{\topkdiv}{\texttt{TopK-Div}\xspace}
\newcommand{\kmeans}{\texttt{K-Means}\xspace} 
\definecolor{cb_red}{RGB}{213,94,0}
\definecolor{cb_blue}{RGB}{0,114,178}
\definecolor{cb_yellow}{RGB}{240,228,66}
\definecolor{cb_gray}{RGB}{204,204,204}
\definecolor{cb_orange}{RGB}{230,159,0}
\definecolor{cb_skyblue}{RGB}{86,180,233}
\definecolor{cb_green}{RGB}{0,158,115}
\definecolor{cb_purple}{RGB}{204,121,167}
\newcommand{\hlcella}{\cellcolor{cb_gray!40}}
\newcommand*{\rom}[1]{\expandafter\@slowromancap\romannumeral #1@} 
\def\eqref#1{equation~\ref{#1}}
\def\1{\bm{1}}
\DeclareMathAlphabet{\mathsfit}{\encodingdefault}{\sfdefault}{m}{sl}
\SetMathAlphabet{\mathsfit}{bold}{\encodingdefault}{\sfdefault}{bx}{n}
\def\gA{{\mathcal{A}}}
\def\gD{{\mathcal{D}}}
\def\gE{{\mathcal{E}}}
\def\gQ{{\mathcal{Q}}}
\def\gT{{\mathcal{T}}}
\def\gX{{\mathcal{X}}}
\def\gY{{\mathcal{Y}}}
\newcommand{\E}{\mathbb{E}}
\title{The Role of Diversity in In-Context Learning for Large Language Models}
\author{Wenyang Xiao\thanks{First two authors contribute equally. WX conducts most of the experiments. HZ designs and prototypes most of the experiments.}\ \ \textsuperscript{1}\quad Haoyu Zhao\textsuperscript{*\ 2}\quad Lingxiao Huang\textsuperscript{1} \\
\textsuperscript{1} School of Computer Science and Technology, Nanjing Universtiy\\
\textsuperscript{2} Department of Computer Science \& Princeton Language and Intelligence (PLI), Princeton University\\
\texttt{wenyangxiao@smail.nju.edu.cn, haoyu@princeton.edu}
}
\begin{document}

\maketitle

\begin{abstract}
In-context learning (ICL) is a crucial capability of current large language models (LLMs), where the selection of examples plays a key role in performance. While most existing approaches focus on selecting the most \emph{similar} examples to the query, the impact of \emph{diversity} in example selection remains underexplored. We systematically investigate the role of \emph{diversity} in in-context example selection through experiments across a range of tasks, from sentiment classification to more challenging math and code problems. Experiments on Llama-3.1, Gemma-2, and Mistral-v0.3 families of models show that diversity-aware selection methods improve performance, particularly on complex tasks like math and code, and enhance robustness to out-of-distribution queries. To support these findings, we introduce a theoretical framework that explains the benefits of incorporating diversity in in-context example selection.
\end{abstract}

\section{Introduction}\label{sec:intro}

In-context learning (ICL)~\citep{brown2020language} has emerged as one of the most significant and versatile capabilities of large language models (LLMs). In this paradigm, a language model is given a prompt which consist of several in-context examples (demonstrations) and the query, and generates the output for the query without updating the parameters or computing the gradient. This feature enables LLMs to adapt to a wide range of tasks using limited resources.

Despite its importance and popularity, the effectiveness of retrieval-based in-context learning is highly sensitive to the selection of in-context examples 
~\citep{lu2021fantastically,liu2021makes,chang2023data}. 
To address this, prior work has explored various selection strategies: choosing examples most \emph{similar} to the query in embedding space~\citep{liu2021makes,DBLP:conf/aaai/YangGW0L0W22,wu2023self,qin2023context}, maximizing feature \emph{coverage}~\citep{levy2023diverse,ye2023complementary,gupta2023coverage}, selecting based on \emph{difficulty}~\citep{ma2025problemsolvinglogicguidedcurriculum,DBLP:conf/emnlp/SwayamdiptaSLWH20,yuan2025enhancingsampleselectionlabel,cook-etal-2025-simple}, or choosing examples based on \emph{sensitivity}~\citep{chen2023relation}. 
Other approaches train deep neural retrievers~\citep{karpukhin2020dense,rubin2022learning,luo2023dr,scarlatos2023reticl} or leverage feedback from large language models to guide selection~\citep{li2023finding,chen2023relation,wang2023large}.
Among these, similarity-based methods remain the fundamental baseline due to their conceptual simplicity and consistent empirical success. However, relying solely on similarity can lead to redundancy among demonstrations and potentially omit important but less similar features~\citep{levy2023diverse,gupta2023coverage}.


In contrast, the explicit role of \emph{diversity} in retrieval-based demonstration selection remains underexplored. 
Diversity has shown promise in other domains—such as fixed-prompt ICL with global demonstration sets~\citep{li2023findingsupportexamplesincontext,luo2024incontextlearningretrieveddemonstrations}, active learning~\citep{giouroukis2025dualdiversityuncertaintyactive,shi2016diversifying}, coreset construction~\citep{wan2024contributingdimensionstructuredeep,zhan2025coresetbasedtaskselectionsampleefficient,sener2018activelearningconvolutionalneural}, and instruction tuning~\citep{wang2024diversitymeasurementsubsetselection}. 
%
%
Although some recent work incorporates feature coverage as a proxy for diversity~\citep{levy2023diverse,ye2023complementary,gupta2023coverage}, coverage primarily aims at spanning input features rather than explicitly promoting representational variety. 
Conversely, explicit diversity-aware selection risks retrieving examples too dissimilar from the query, potentially harming performance~\citep{DBLP:conf/acl/AnLFC0L023}. 
Therefore, it remains unclear whether and when explicit diversity consideration is beneficial in retrieval-based ICL—especially for tasks lacking clear local structure. This naturally motivates the following fundamental questions:

\begin{quote}
    \emph{Should we explicitly consider diversity when selecting in-context examples? If so, under what conditions does it outperform similarity-based methods? And fundamentally, why does diversity help?}
\end{quote}

\subsection{Our contributions}

\begin{figure*}[t!]
    \centering
    \begin{subfigure}[t]{0.3\textwidth}
        \centering
        \includegraphics[width=\linewidth]{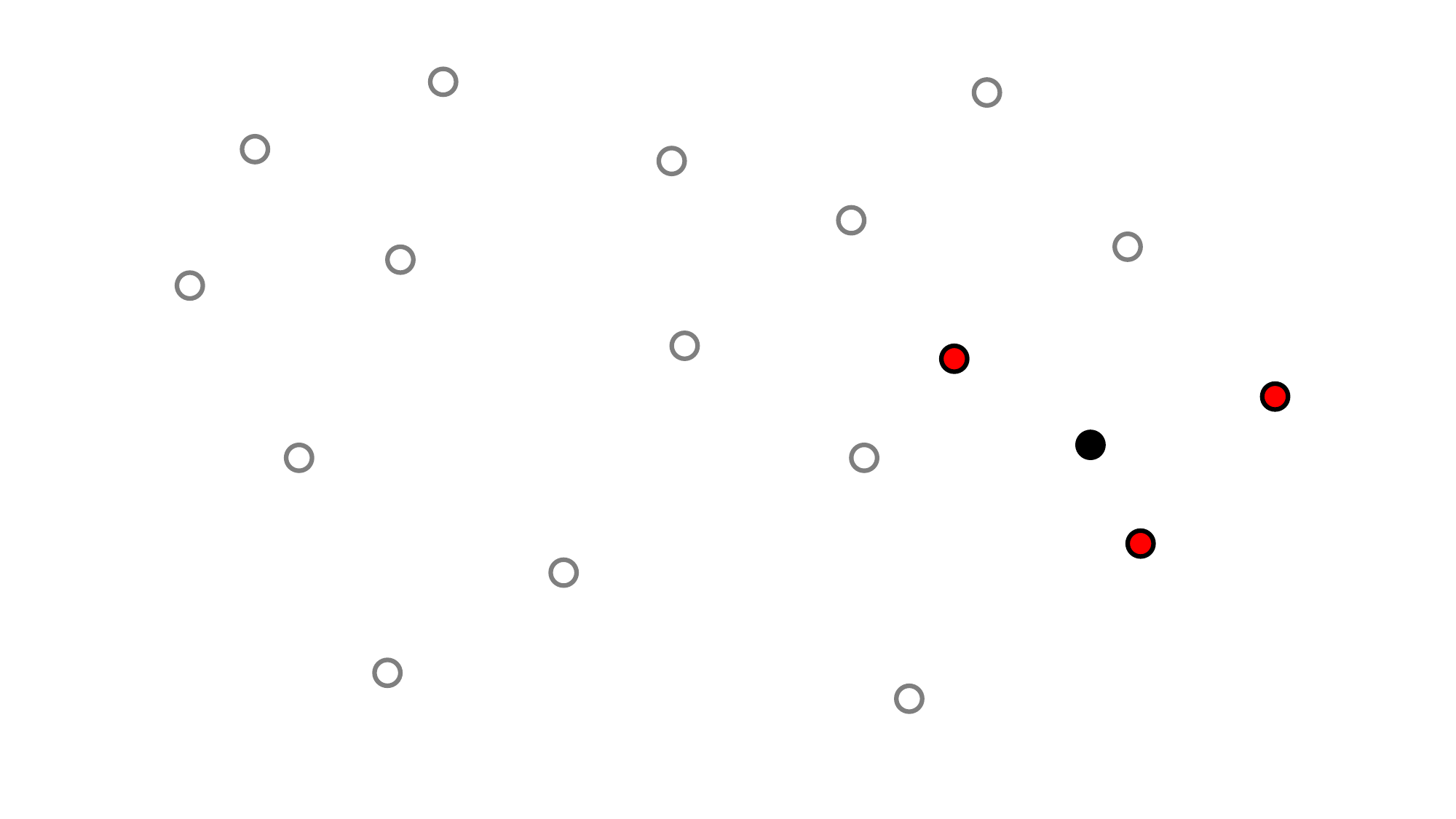}
        \caption{\topk selection}
    \end{subfigure}%
    \begin{subfigure}[t]{0.3\textwidth}
        \centering
        \includegraphics[width=\linewidth]{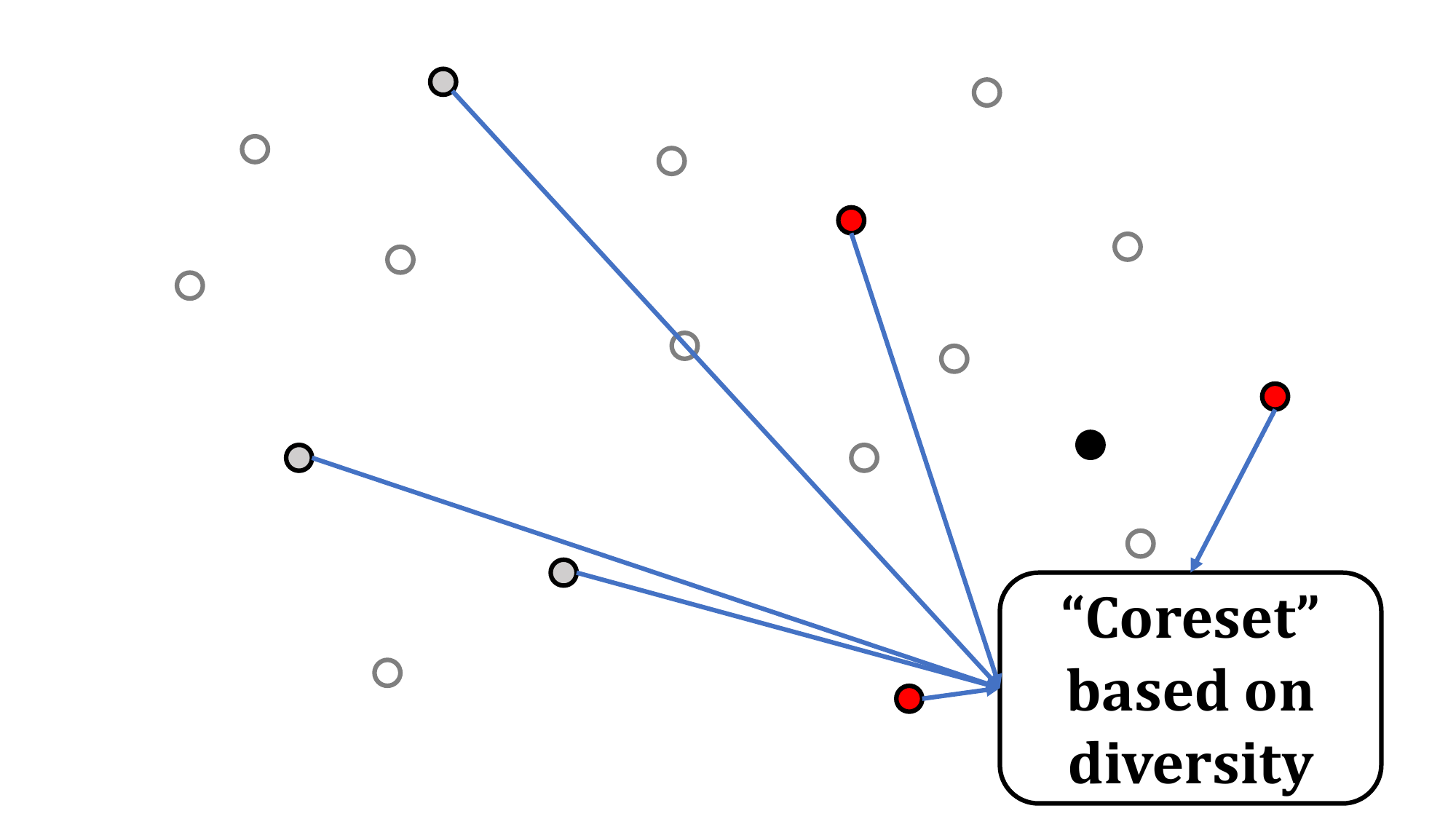}
        \caption{\diversity selection}
    \end{subfigure}
    \begin{subfigure}[t]{0.3\textwidth}
        \centering
        \includegraphics[width=\linewidth]{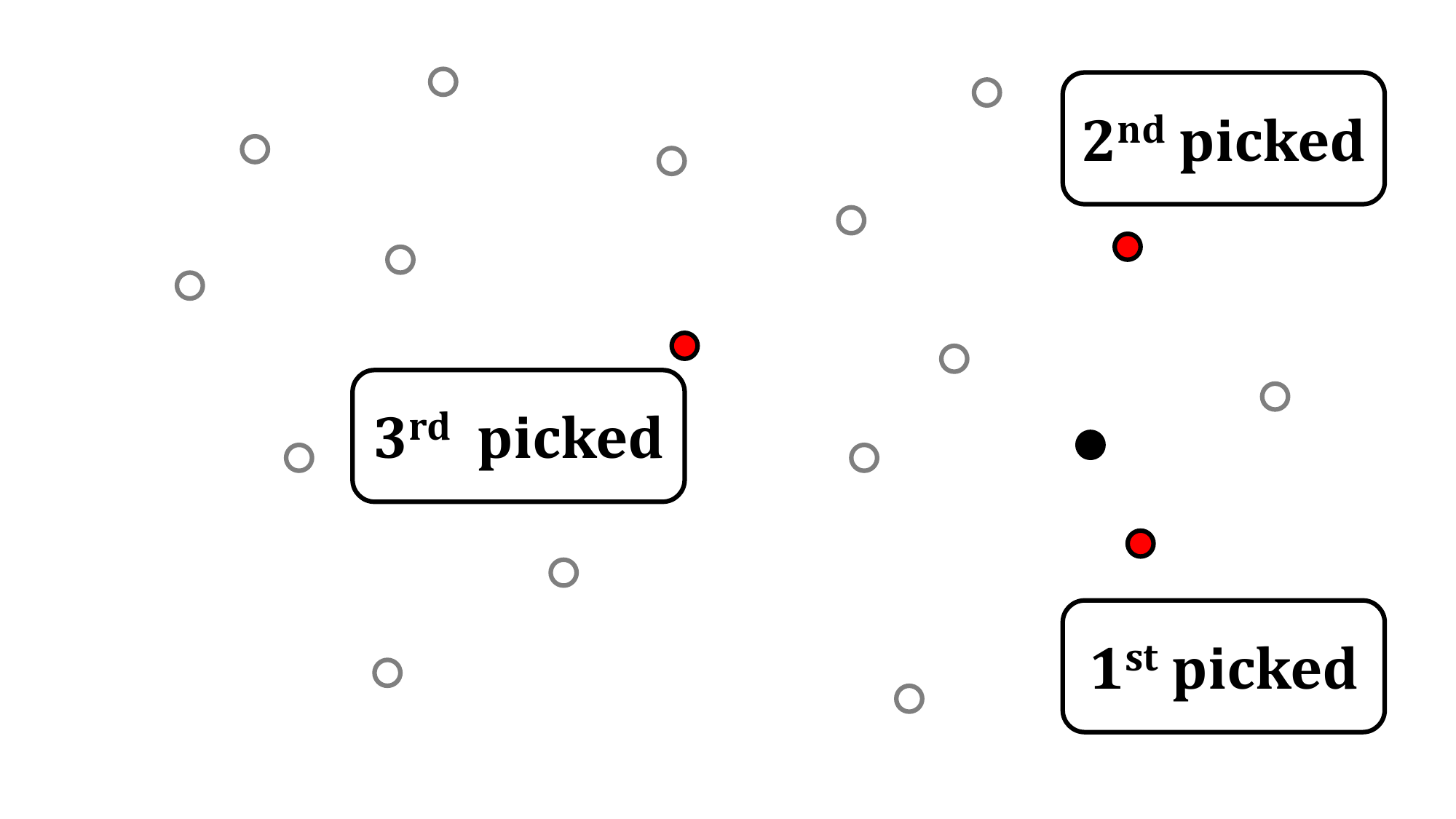}
        \caption{\topkdiv selection}
    \end{subfigure}
    \caption{An illustrative example for \topk, \diversity, and \topkdiv methods. Point filled in black denotes the query. \textbf{(a)} \topk: Select the most similar demonstrations (3 points filled in red) in the embedding space. \textbf{(b)} \diversity: First select a ``coreset'' based on some \emph{diversity} metric, which is fixed for all queries (6 points filled in gray or red). Then select the most similar demonstrations (3 points filled in red) among this ``coreset''. \textbf{(c)} \topkdiv: Select the demonstrations sequentially based on the linear combination of similarity to the query and the diversity with the selected examples. The first example is the one most similar to the query. When picking the second, it is balanced between the similarity to the query and the diversity from the first example.}
    \label{fig:illustration}
    \vspace{-0.2in}
\end{figure*}

We address the research question by systematically studying the role of diversity in in-context learning across tasks including sentiment classification, commonsense reasoning, math generation, reading comprehension, and SQL code generation. 
We compare four demonstration selection methods: (1) random selection (\rand); (2) selecting the $K$ most similar examples to the query (\topk)~\citep{liu2021makes}; (3) selecting similar examples from a diversity-reduced subset (\diversity)~\citep{su2023selective}, which relates to DPP-based diversity~\citep{chen2018fast}; and (4) a sequential method that balances similarity to the query and diversity among selected examples (\topkdiv). 
See \Cref{fig:illustration} for illustration and \Cref{sec:background} for formal definitions.

Through experiments on frontier open-source models (Llama-3.1~\citep{dubey2024llama}, Gemma-2~\citep{team2024gemma}, and Mistral-v0.3~\citep{jiang2023mistral}), we reach the following findings.

\underline{\textbf{Finding 1}:} \emph{Diversity-aware demonstration selection methods perform better on more ``challenging'' tasks like reading comprehension, math, and code.} While changing the tasks and even the language model to use will change the ranking of the demonstration selection methods we test, in general we find that diversity-aware methods, namely \diversity and \topkdiv perform better on more challenging tasks like reading comprehension, math, and simple code generation. 
On the other hand, on simple tasks like sentiment classification, \topk performs the best (\Cref{tab:main-res}).
Quantitative analysis of performance improvements under varying levels of added diversity demonstrates that more challenging tasks benefit more from increased diversity, further validating this finding (\Cref{tab:best knn diversity}).

\underline{\textbf{Finding 2}:} \emph{Diversity-aware methods works better for out-of-distribution queries.} When the query and demonstrations come from different distributions, diversity-aware methods are more likely to perform well. For example, on sentiment classification, when both demonstration and query come from the SST-2 dataset, which consists of movie reviews, \topk performs the best, and there is a gap with all other methods. However, when changing the demonstrations from SST-2 to Amazon, a shopping review dataset, \topkdiv outperforms \topk (\Cref{tab:ood-res}). 
A similar observation holds for various splits of Geoquery dataset (\Cref{fig:geoquery-splits}).

\underline{\textbf{Finding 3}:} In the same task, diversity-aware methods likely perform better on ``harder'' examples, e.g. reading comprehension with longer context, or SQL code generation with more structures (\Cref{tab:harder-example}). 

We discuss these findings in detail in \Cref{sec:main-exp}, where we also examine the distinction between diversity and coverage. 
Our analysis reveals that there are cases in which diversity exhibits a ``beyond-coverage'' phenomenon, both at the task level and the example level. 
We also conduct ablation studies by changing the model sizes (from 1B to 70B) and by varying the number of demonstrations. 
We discuss the robustness of our findings under different choices of hyperparameters.
Finally, we also present a theory to justify the use of diversity in in-context learning in \Cref{sec:theory}.
Overall, our findings, supported by theoretical justification, offer a deeper understanding of the role of diversity in demonstration selection for in-context learning.

\section{Background and notations}\label{sec:background}

In this section, we introduce the in-context learning (ICL) paradigm, relevant demonstration selection methods, and associated notations.

\textbf{In-context learning (ICL).}
A task $\gT=(\gX,\gY,P(y|x))$ defines a probabilistic mapping from an input $x \in \gX$ to an output $y \in \gY$. For example, the task can be sentiment classification where the input space contains reviews of products and the output space contains the customer’s corresponding sentiment (positive or negative). We are provided with a demonstration set $D = \{(x_i,y_i)\}_{i=1}^n$, where inputs $x_i$ are drawn from a demonstration distribution $\gD_\gX$ and $y_i \sim P_\gT (y|x_i)$. Queries $x_q$ are drawn from a query distribution $\gQ_\gX$, which may differ from $\gD_\gX$ (representing shifts in domain or complexity). As to math task, you might have many elementary school-level math problems at hand but you want to solve a math problem from middle school. Now given a query input $x_q\sim\gQ_\gX$, the in-context learning paradigm refers to the following capability of a large language model.

\begin{definition}[\textbf{In-Context learning (ICL)}]\label{defn:icl}
    Given an LLM, a prompting strategy \texttt{Prompt}, a demonstration set $D = \{(x_i,y_i)\}_{i=1}^n$, and a query $x_q$, ICL involves selecting a small subset $S=\{(x_{j_i},y_{j_i})\}_{i=1}^K $ with shots $K$ from the demonstrations $D$. The LLM then predicts the output $y_q$ for $x_q$ as:
    $P_\gT(y|x_q) \approx \text{LLM}(\texttt{Prompt}(S, x_q))$.
\end{definition}

\textbf{Demonstration selection for ICL.}
Choosing a small subset $S$ (see \Cref{defn:icl}) is vital due to LLM context limits, efficiency needs, and the observation that excessive demonstrations can impair performance. Prior work has shown that ICL performance is highly sensitive to this selection~\citep{liu2021makes}, and thus sparks the study for \emph{demonstration selection}. While numerous selection strategies are proposed , the most notable and effective methods are the
ones that select the demonstrations most similar to the query in the embedding space. Efforts are also made to retrieve the demonstrations using another model (can be another LLM), as well as considering diversity/coverage. However, there is no consensus on which method to use in a specific setting, and there is nearly no understanding of these methods (further discussed in \Cref{sec:related-works}). 

To analyze these methods and the role of diversity, we focus on four representative selection strategies:

\textbf{Method 1: \rand.} For a query $x_q$, this method uniformly and randomly selects $K$ demonstrations from the set $D$. 
Note that \rand can also be viewed as a method that is aware of
diversity, but it has nothing to do with the coverage.

\textbf{Method 2: \topk.} This method selects $K$ demonstrations from $D$ that exhibit the highest cosine similarity to the query $x_q$ within an embedding space mapped by $E:\gX\to\gE$, serving as a core similarity-based approach. 
It maximizes 
\begin{equation}   \label{eqa:simi}
\texttt{Similarity} \textstyle (E(x_i), E(x_q)) := \frac{\langle E(x_i), E(x_q)\rangle}{\|E(x_i)\|\cdot\|E(x_q)\|}.
\end{equation}

\textbf{Method 3: \diversity.} This approach first constructs a diverse ``coreset'' $D_r \subset D$ of size $m$ (where $K \le m \le n$). Starting with one randomly chosen demonstration, $D_r$ is built greedily by adding $(x,y) \in D \setminus D_r$ that maximizes 
\begin{equation} \label{eqa:div}    
\texttt{Diversity}\textstyle(E(x), D_r) := 1 - \frac{1}{|D_r|}\sum_{(x_j,y_j)\in D_r}\texttt{Similarity}(E(x),E(x_j)),
\end{equation}

we stop after $D_r$ contains $m$ examples. This is the procedure to get a diverse set of demonstrations for a task~\citep{su2023selective}. Subsequently, \topk selection is applied to $D_r$ to choose $K$ demonstrations for the query $x_q$. The coreset size $m$ interpolates between diversity (small $m$, e.g., $m=K$ will purely focus on diversity) and similarity (large $m$ makes it similar to \topk on $D$).

\textbf{Method 4: \topkdiv.} This method serves as a combination of \topk and \diversity, which includes some awareness of the diversity through similarity-based selection. It is also a greedy-like procedure when selecting the demonstration set $S$. Suppose that $S$ does not reach size $K$, then we select the demonstration $(x,y)\in D\setminus K$ that maximize the following metric:
\begin{equation} \label{eqa:topkdiv}
\alpha\cdot\texttt{Similarity}\textstyle(E(x),e_q) + (1-\alpha)\texttt{Diversity}(E(x),S), 
\end{equation}


Here, $\alpha$ is a hyperparameter that controls the trade-off between similarity and diversity. 
We stop when $S$ has size $K$. 
For the first demonstration (when $S$ is empty), $\texttt{Diversity}(E(x), S)$ is defined as $0$, thereby prioritizing similarity. 
\topkdiv can be viewed as incorporating a tunable level of diversity (controlled by $\alpha$) into the similarity-based approach \topk. 
Note that \diversity is a special case of \topkdiv with $\alpha = 0$.

See \Cref{fig:illustration} for an illustrative example of these selection methods. 

\section{Experiments and findings}\label{sec:main-exp}

This section empirically tests whether diversity-aware retrieval (\diversity, \topkdiv) yields more reliable in-context learning than similarity-only baselines (\topk).

\textbf{Tasks and datasets.} We consider 5 tasks: sentiment classification (classification task), commonsense reasoning (multi-choice), text to SQL generation (generation), math (generation), and reading comprehension (generation). For sentiment classification, we test on SST-2~\citep{scarlatos2023reticl}, IMDB~\citep{maas2011learning} and Amazon (polarity)~\citep{mcauley2013hidden}. For commonsense reasoning, we use ARC-Easy~\citep{clark2018think} and CommonsenseQA (CsQA)~\citep{talmor2019commonsenseqa}. For text to SQL generation, we use GeoQuery~\citep{zelle1996learning,tang2001using}. For math problems, we test on GSM8K~\citep{cobbe2021gsm8k} and GSM-Plus-Mini~\citep{DBLP:conf/acl/LiCZKB24} datasets. For reading comprehension, we use SQuAD~\citep{rajpurka2016squad} and SCIQ~\citep{welbl2017crowdsourcing} datasets. 
We subsample some datasets to reduce the computation resources needed.


\textbf{Models.} Our main experiments are conducted on Llama 3.1 and Llama 3.2~\citep{dubey2024llama}, Gemma 2~\citep{team2024gemma}, and Mistral v0.3~\citep{jiang2023mistral} families of models. For math problems (GSM8K and GSM-Plus-Mini), we use the instruction-tuned LLMs, while for all other datasets, we use the base model. For the main experiments, we use Sentence-BERT~\citep{reimers2019sentence} to compute all the embeddings for \topk, \diversity, and \topkdiv. 
Experiments are conducted on 2 A100 GPUs.


\textbf{Hyperparameters.} For \diversity, we choose to first reduce the demonstration set $D$ to a ``coreset'' $D_r$ with size $100$. This choice balances full similarity selection (\topk) and methods focusing mainly  on diversity. 
For \topkdiv, we choose $\alpha=1/2$ to balance between similarity and diversity. For the classification task, we predict positive if the logit for token \texttt{great} is larger than that for token \texttt{terrible} for the next token prediction given a prompt. For multi-choice tasks, we choose the option with the lowest average CE loss given a prompt. For generation tasks (text to SQL, math, reading comprehension), we use greedy decoding. 
More experimental details, including the prompt for each task, can be found in \Cref{sec:more-exp-details}.

\subsection{Main findings}\label{sec:main-findings}

\begin{table*}[!t]
\centering
\small
\caption{\textbf{(Comparison of different in-context example selection methods)} We compare diversity-aware methods \diversity and \topkdiv  with randomly chosen (\rand) and similarity-based method \topk on a variety of tasks using different models with different number of in-context examples $K$. For \topk and \topkdiv, we test ten different permutations of the demonstration due to the determined choice by these methods; For \rand and \diversity, we test ten different random seeds. We use the corresponding instruct-tuned model for math tasks (GSM8K and GSM-Plus-Mini) and base model for all other tasks.  For TopK and TopK-Div methods - both being deterministic approaches - we computed outcomes across ten distinct example permutations. For Rand and Div methods, we report the averaged results across ten random seeds. There is a huge improvement when the shot number increases from 0 to 4 / 8,  which demonstrates the effectiveness of our example selection. Due to the absence of prior knowledge for Geoquery in the zero-shot ($k=0$) setting, we omit its $k=0$ results. The bold entries indicate optimal performances. The $\mathrm{std}$ is no more than 1\% in most cases; see Appendix~\ref{sec:hyperparameter-ablation} for details.}
\label{tab:main-res}
\begin{adjustbox}{width=0.92\textwidth}
\begin{tabular}{cccccccccccc}
\toprule        
\multirow{2}{*}{Model} & \multicolumn{1}{c}{\multirow{2}{*}{$K$}} & \multirow{2}{*}{Method}  & \multicolumn{2}{c}{Classification} & \multicolumn{2}{c}{Multi-choice}  & \multicolumn{2}{c}{Math} & Code & \multicolumn{2}{c}{Reading}\\
& & & SST-2 & Amazon & ARC-Easy & CsQA & GSM8K & GSM-Plus-Mini & GeoQuery & SQuAD & SCIQ \\
\midrule
\parbox[t]{3mm}{\multirow{8}{*}{\rotatebox[origin=c]{90}{Llama-3.1-8B}}} 
& \multirow{1}{*}{0} 
& - & $87.50$ & $95.40$ & $82.43$ & $62.80$ & $53.45$ & $65.12$ & --- & $42.30$& $36.40$\\
\cline{2-12}
& \multirow{4}{*}{4} 
& \rand & $91.31$&$96.38$ & $84.72$& $71.15$& $\mathbf{82.24}$& $66.90$ &  $12.50$& $\mathbf{75.95}$& $74.00$\\
& & \topk &  $\mathbf{94.13}$&$96.24$&   $\mathbf{86.10}$& $72.54$& $81.99$ & $65.30$ & $62.79$& $73.51$& $72.70$\\
& & \diversity & $91.50$&$96.18$ & $85.06$&  $71.17$&  $82.14$&  $\mathbf{66.92}$ &  $33.79$& $75.66$& $\mathbf{74.47}$\\
& & \topkdiv& $92.75$ & $\mathbf{96.43}$&  $85.83$& $\mathbf{72.57}$& $81.74$& $66.12$ &  $\mathbf{71.14}$&$73.28$ & $73.87$\\
\cline{2-12}
& \multirow{4}{*}{8} 
& \rand & $92.27$ &$\mathbf{96.63}$ & $84.38$& $72.23$& $82.81$ & $\mathbf{66.72}$ & $23.11$& $77.13$& $74.65$\\
& & \topk & $\mathbf{93.64}$&$96.12$ & $\mathbf{85.91}$& $\mathbf{73.91}$& $82.26$ & $65.99$ & $72.04$&  $75.52$& $74.72$\\
& & \diversity & $92.95$ & $96.25$ & $84.97$& $72.77$& $\mathbf{82.98}$ & $66.56$ & $38.61$& $\mathbf{77.71}$& $\mathbf{75.17}$ \\
& & \topkdiv & $93.33$ & $96.57$ & $85.39$ & $73.76$ & $82.63$ & $66.48$ & $\mathbf{78.68}$ & $76.13$ & $75.07$\\
\midrule
\parbox[t]{3mm}{\multirow{8}{*}{\rotatebox[origin=c]{90}{Gemma-2-9B}}} 
& \multirow{1}{*}{0} 
& - & $67.50$ & $85.10$ & $88.15$ & $61.80$ & $16.07$& $32.79$ & ---& $37.90$& $41.10$\\
\cline{2-12}
& \multirow{4}{*}{4} 
& \rand & $93.33$& $96.15$& $89.52$& $74.70$& $84.29$ & $74.40$ & $13.89$& $\mathbf{77.19}$& $75.80$\\
& & \topk & $\mathbf{94.47}$& $96.34$& $\mathbf{90.50}$& $75.19$& $84.25$& $\mathbf{74.50}$ & $61.14$& $74.82$& $75.24$\\
& & \diversity & $93.45$& $95.69$& $90.03$& $74.85$& $\mathbf{84.44}$& $73.34$ &$36.29$ & $77.06$& $\mathbf{75.96}$\\
& & \topkdiv& $93.34$& $\mathbf{96.57}$& $90.19$& $\mathbf{75.60}$& $83.54$& $74.47$ & $\mathbf{70.43}$& $75.05$& $75.21$\\
\cline{2-12}
& \multirow{4}{*}{8} 
& \rand & $93.30$& $96.09$& $89.39$& $75.98$& $\mathbf{84.34}$& $74.48$ & $24.36$& $\mathbf{79.23}$& $76.28$\\
& & \topk & $\mathbf{94.20}$& $96.55$& $\mathbf{90.62}$& $76.14$& $83.57$& $\mathbf{75.36}$ & $71.00$& $77.59$& $75.55$\\
& & \diversity & $93.41$& $95.94$& $89.90$& $\mathbf{76.60}$& $84.22$& $74.69$ & $42.07$& $79.05$& $\mathbf{76.65}$\\
& & \topkdiv & $94.04$& $\mathbf{96.58}$& $90.48$& $76.53$ & $83.85$& $75.16$ &$\mathbf{76.32}$ & $77.64$& $76.24$\\
\midrule
\parbox[t]{3mm}{\multirow{8}{*}{\rotatebox[origin=c]{90}{Mistral-7B-v0.3}}} 
& \multirow{1}{*}{0} 
& - & $66.50$ & $94.00$ & $76.41$ & $51.80$& $9.48$ & $5.17$ & --- &$30.50$ &$34.20$ \\
\cline{2-12}
&  \multirow{4}{*}{4} 
& \rand & $91.00$& $94.02$& $82.77$ & $69.83$ & $48.78$ &  $37.20$ & $12.14$&  $\mathbf{76.70}$ & $74.71$\\
& & \topk & $\mathbf{93.57}$& $\mathbf{96.17}$& $\mathbf{85.21}$ & $69.73$ & $49.28$ & $38.20$ & $60.14$&  $75.04$ & $73.73$\\
& & \diversity & $91.98$& $94.15$& $82.98$ & $\mathbf{70.15}$ & $49.49$ & $37.50$ &  $34.89$& $75.96$ & $\mathbf{75.83}$ \\
& & \topkdiv& $92.73$& $95.90$& $84.55$ & $69.91$ & $\mathbf{49.99}$ & $\mathbf{38.45}$ &$\mathbf{71.46}$ &   $74.43$& $73.16$\\
\cline{2-12}
& \multirow{4}{*}{8} 
& \rand & $92.49$& $95.35$& $83.69$ & $71.65$ &$47.86$ & $36.32$ & $22.18$& $77.30$& $75.54$\\
& & \topk & $\mathbf{93.61}$& $\mathbf{96.15}$& $\mathbf{85.17}$ &  $71.88$& $48.43$ & $37.35$ & $70.50$& $77.05$ &  $75.44$\\
& & \diversity & $92.55$ & $95.10$ & $84.27$ & $\mathbf{72.04}$ & $48.33$ & $36.12$ & $39.14$ & $\mathbf{77.67}$  & $\mathbf{76.30}$\\
& & \topkdiv & $93.47$ & $96.11$ & $84.85$ & $71.81$ & $\mathbf{48.60}$ & $\mathbf{37.81}$ & $\mathbf{77.93}$ &   $77.44$ & $75.22$\\
\bottomrule
\end{tabular}
\end{adjustbox}
\end{table*}

\textbf{Diversity-aware methods perform better on more ``challenging'' tasks.}
\Cref{tab:main-res} summarizes our main results in the in-distribution (ID) setting, where the demonstration distribution $\gD_\gX$ matches the query distribution $\gQ_\gX$. 
For simpler tasks like sentiment classification, \topk consistently performs best, significantly outperforming diversity-emphasizing methods (\rand and \diversity), while \topkdiv (balancing similarity and diversity) typically ranks between these extremes. 
In commonsense reasoning (multi-choice), introducing some diversity via \topkdiv improves performance over pure \topk, as observed in Commonsense QA, although the gains on ARC-Easy remain modest.

For more complex tasks—including reading comprehension, text-to-SQL generation, multi-step mathematical reasoning, and GeoQuery—introducing diversity consistently improves performance over the similarity-based \topk baseline. 
In GeoQuery specifically, diversity (\topkdiv) yields at least a 7\% absolute accuracy gain, likely due to enhanced feature coverage~\citep{levy2023diverse,ye2023complementary}; however, excessive diversity (\rand and \diversity) becomes detrimental, as overly dissimilar examples fail to illustrate coherent solution patterns. 
For math and reading comprehension tasks, methods emphasizing diversity (\diversity and even \rand) also surpass \topk. 
Notably, the success of random selection here cannot be attributed purely to coverage, as random demonstrations do not systematically capture similar structures. 
Instead, we hypothesize that when models already possess strong zero-shot capabilities (as indicated by tasks like Reading and Math), diversity helps by directing the model toward fundamental reasoning skills rather than memorization.

\begin{table*}[!t]
    \centering
    \small
    \caption{Performance of 0-shot and 1-shot Baseline in Code and Reading Tasks. When $k=1$, there is only one possible permutation, so we report a single result for both TopK and TopK-Div methods. For Rand and Div approaches, we report the averaged results across ten random seeds. Embedding = all-roberta-large-v1.}
    \label{tab:0-shot}
    \begin{adjustbox}{width=0.92\textwidth}
    \begin{tabular}{ccccccccccc}
        \toprule
        \multirow{2}{*}{Model} & \multirow{2}{*}{Dataset} & \multicolumn{1}{c}{$K=0$} & \multicolumn{4}{c}{$K=1$} & \multicolumn{4}{c}{$K=4$} \\
        \cmidrule(lr){3-3} \cmidrule(lr){4-7} \cmidrule(lr){8-11}
        & & - & Rand & Topk & Div & Topk-Div & Rand & Topk & Div & Topk-Div \\
        \midrule
        \parbox[t]{25mm}{\multirow{2}{*}{{Llama-3.1-8B}}} 
        & Code (Geoquery) & $-$ & $2.61$ & $37.14$ & $16.93$ & $37.14$ & $12.57$ & $63.04$ & $33.71$ & $71.07$ \\
        & Reading (SQuAD) & $42.30$ & $68.64$ & $67.00$ & $67.87$ & $67.00$ & $75.95$ & $73.51$ & $75.66$ & $73.28$ \\
        \midrule
        \parbox[t]{25mm}{\multirow{2}{*}{{Gemma-2-9B}}} 
        & Code (Geoquery) & $-$ & $3.07$ & $41.43$ & $16.71$ & $41.43$ & $13.89$ & $61.14$ & $36.29$ & $70.43$ \\
        & Reading (SQuAD) & $37.90$ & $71.34$ & $69.00$ & $70.69$ & $69.00$ & $77.19$ & $74.82$ & $77.06$ & $75.05$ \\
        \midrule
        \parbox[t]{25mm}{\multirow{2}{*}{{Mistral-7B-v0.3}}} 
        & Code (Geoquery) & $-$ & $2.75$ & $40.71$ & $18.39$ & $40.71$ & $12.14$ & $60.14$ & $34.89$ & $71.46$ \\
        & Reading (SQuAD) & $30.50$ & $69.12$ & $66.30$ & $67.80$ & $66.30$ & $76.70$ & $75.04$ & $75.96$ & $74.43$ \\
        \bottomrule
    \end{tabular}
    \end{adjustbox}
\end{table*}

To verify whether the model inherently possesses the ability to solve certain tasks, we tested its 0-shot and 1-shot performance on the SQuAD and GeoQuery datasets. For the Reading task, accuracy is calculated only when the output exactly matches the answer, imposing strict format requirements. Consequently, on SQuAD, once the model understood the output format in the 1-shot setting, the absolute performance gap compared to the 4-shot setting was less than 8\%. However, on GeoQuery, even after the model grasped the output format via the 1-shot example, the absolute performance gap compared to the 4-shot setting was still over 20\%.
Therefore, the model possesses a strong inherent ability to solve the Reading task (a similar conclusion also holds for the Math task). Conversely, the model itself lacks domain-specific knowledge related to GeoQuery and thus needs to learn more from the provided context.

Given the clear benefits of diversity for more challenging tasks, we conjecture that increased task complexity demands greater diversity within \topkdiv (lowering the diversity parameter $\alpha$). 
Let $\mathrm{Acc}_{\alpha}$ denotes the accuracy of \topkdiv parameterized by $\alpha$ (\Cref{eqa:topkdiv}), We define:
\begin{equation} \label{eqa:topkdiv-difference}
\textstyle \Delta =\frac{1}{5}\sum\limits_{i=6}^{10} \mathrm{Acc_{i/10}} - \frac{1}{5}\sum\limits_{i=1}^{5} \mathrm{Acc_{i/10}}.
\end{equation} 
The difference $\Delta$ quantifies the gap between the average accuracy of lower diversity and higher diversity (e.g, $\Delta > 0$ means less diversity is better). 
\Cref{tab:best knn diversity} confirms this hypothesis: Minimal diversity is optimal for simpler tasks, while higher diversity consistently enhances performance as task difficulty increases.
\begin{table*}[!t]
    \centering
    \small
    \caption{\textbf{Comparison of \topkdiv results with different $\alpha$}. We report $\Delta$ (\Cref{eqa:topkdiv-difference}) for Classification, Multi-choice and Reading task accross six datasets. For each value of $\alpha$ in \topkdiv, we tested ten permutations and calculated the mean. We also report the average $\Delta$ for both shots and both models. On the simplest Classification task (SST-2, Amazon), the model performs significantly better with \topkdiv when less diversity is added. On the most difficult Reading task (SCIQ, SQuAD), the model performs significantly better with \topkdiv when more diversity is added. For the Multi-choice task (ARC-Easy, CSQA), which has a difficulty level between the two, the difference is not significant.}
    \label{tab:best knn diversity}
    \begin{tabular}{cccccccc}
    \toprule
    \multirow{2}{*}{Model} & \multirow{2}{*}{$K$} & \multicolumn{6}{c}{$\Delta$} \\
    & & SST-2 & Amazon & ARC-Easy & CSQA & SCIQ & SQUAD \\
    \midrule
    \multirow{2}{*}{Llama-3.2-3B}
    & 4 & 0.11\% & 0.12\% & 0.45\% & 0.41\% & -0.80\% & 0.40\% \\
    \cmidrule(lr){2-8}
    & 8 & 1.05\% & -0.01\% & 0.20\% & 0.06\% & -0.74\% & -1.01\% \\
    \midrule
    \multirow{2}{*}{Gemma-2-2B}
    & 4 & 0.34\% & 0.50\% & 0.87\% & -0.37\% & -0.07\% & 0.04\% \\
    \cmidrule(lr){2-8}
    & 8 & 0.19\% & 0.27\% & -0.15\% & 0.18\% & -0.49\% & -0.82\% \\
    \midrule
    \multirow{1}{*}{Average}
    & - & 0.42\% & 0.22\% & 0.34\% & 0.07\% & -0.53\% & -0.35\% \\
    \bottomrule
    \end{tabular}
 \end{table*}

\begin{table}[!t]
\centering
\small
\caption{\textbf{(Comparison of different methods when demonstration and query come from different distribution)} We compare the methods on different tasks. The number of shots is fixed as $K=4$. We observe that diversity-aware methods are more robust to out-of-distribution query. The performance drop from ID to OOD on \topk is in general larger than diversity-aware methods.}
\label{tab:ood-res}
\begin{tabular}{ccccccc}
\toprule
& Test & Demo. & \rand & \topk & \diversity & \topkdiv \\
\midrule
\parbox[t]{3mm}{\multirow{7}{*}{\rotatebox[origin=c]{90}{Llama-3.1-8B}}} &\multirow{3}{*}{SST-2} & SST-2 & 91.35& \textbf{93.90}& 91.88& 92.40  \\
& & \hlcella IMDB & \hlcella 88.85& \hlcella \textbf{90.80}& \hlcella 90.71& \hlcella \textbf{90.80}\\
& & \hlcella Amazon & \hlcella 88.28& \hlcella 89.50& \hlcella 86.64& \hlcella \textbf{89.60}\\
\cline{2-7}
& \multirow{2}{*}{CsQA} & CsQA & 70.93& 72.40& 70.95& \textbf{72.80}\\
&  & \hlcella ARC-Easy &\hlcella 66.86&\hlcella 66.70&\hlcella 67.08&\hlcella \textbf{67.70}\\
\cline{2-7}
& \multirow{2}{*}{SCIQ} & SCIQ & 74.17& 72.80& \textbf{74.44}& 73.60\\
&  & \hlcella SQuAD &\hlcella 72.11&\hlcella 71.40&\hlcella \textbf{72.79}&\hlcella 71.60\\
\midrule
\parbox[t]{3mm}{\multirow{7}{*}{\rotatebox[origin=c]{90}{Gemma-2-9B}}} & \multirow{3}{*}{SST-2} & SST-2 & 93.23& \textbf{93.80}& 93.43& 93.40  \\
&  & \hlcella IMDB & \hlcella 88.66& \hlcella 89.90& \hlcella 88.59& \hlcella \textbf{91.10}\\
&  & \hlcella Amazon & \hlcella 88.69& \hlcella 89.40& \hlcella \textbf{90.49}& \hlcella 89.60\\
\cline{2-7}
& \multirow{2}{*}{CsQA} & CsQA & 74.57& 75.00& 74.61& \textbf{75.50}\\
&  & \hlcella ARC-Easy &\hlcella 68.30&\hlcella 68.90&\hlcella 68.58&\hlcella \textbf{69.50}\\
\cline{2-7}
& \multirow{2}{*}{SCIQ} & SCIQ &  75.60&  75.50&  \textbf{76.48}&  74.80\\
&  & \hlcella SQuAD &\hlcella  73.63&\hlcella  73.60&\hlcella \textbf{74.64} &\hlcella 73.50 \\
\bottomrule
\end{tabular}
\end{table}

\begin{figure}[!t]
    \centering
    \includegraphics[width=0.85\linewidth]{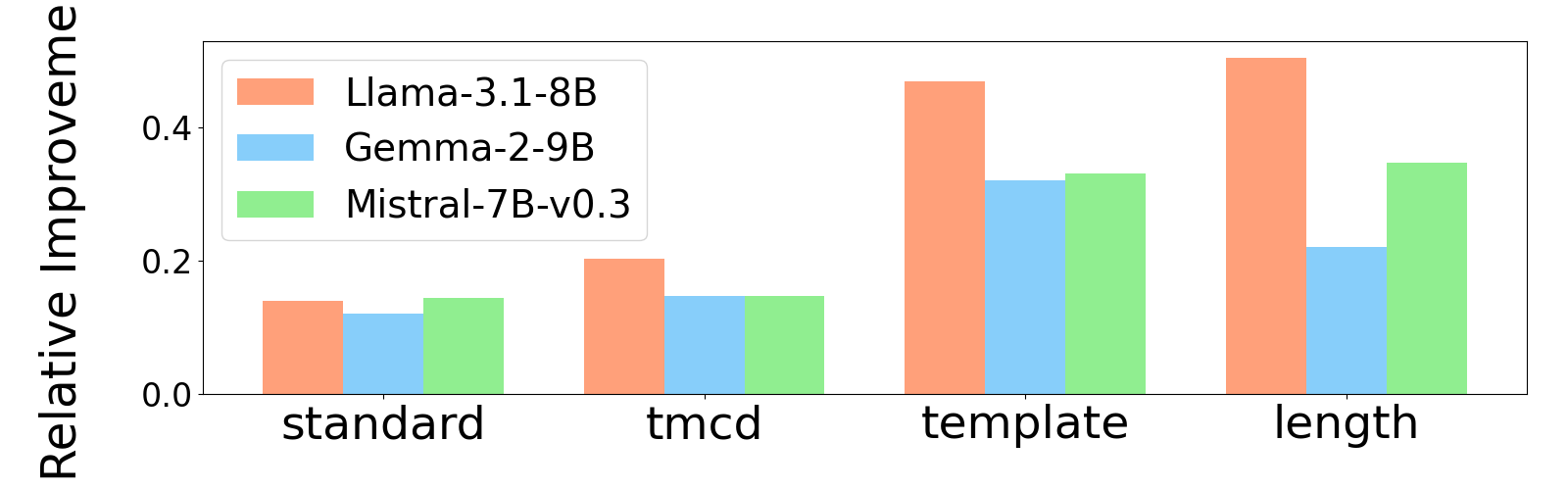}
    \caption{\small \textbf{(Comparison of different methods on GeoQeury OOD setting)} We report the relative improvement of \topkdiv over \topk when demonstrations and queries come from different GeoQuery dataset splitting ways. ``standard'' split denotes ID the setting. The relative improvement enlarges in the OOD setting.}
    \label{fig:geoquery-splits}
\end{figure}

\textbf{Diversity helps out-of-distribution generalization.}
\Cref{tab:ood-res} presents results on sentiment classification, commonsense reasoning, and reading comprehension, while \Cref{fig:geoquery-splits} shows text-to-SQL generation performance in the out-of-distribution (OOD) setting, where the demonstration distribution $\gD_\gX$ and query distribution $\gQ_\gX$ differ.

Overall, diversity improves OOD in-context learning. In sentiment classification, \topk performs best when both demonstrations and queries come from SST-2. 
However, when demonstrations shift to IMDB (another movie review dataset), \topk and \topkdiv perform similarly. When using Amazon (a shopping review dataset) as demonstrations, \topkdiv surpasses \topk.
A similar trend is observed in commonsense reasoning: replacing Commonsense QA (ID) demonstrations with ARC-Easy (OOD) increases the performance gap between \diversity and \topk from 0.4\% to 1.0\%. 
Text-to-SQL generation follows this pattern, with a larger improvement in OOD settings. 
Additionally, we note that GSM-Plus-Mini serves as an OOD setting for GSM8K (Math in Table \ref{tab:main-res}), as they share the same training set. 
A larger improvement from adding diversity is also observed on GSM-Plus-Mini.

\begin{table}[!t]
\centering
\caption{\textbf{(Comparison of different methods on math when demonstration and query come from different distribution)} OOD setting for math problem where the test dataset is chosen to be PRM800K. We find that,  diversity-aware methods are more superior than \topk in OOD setting. The method that achieves the best in each setting is highlighted. We also report the average results of each method for both shots and all models.}
\label{tab:ood-math}

\begin{tabular}{ccccccc}
\toprule
Model & Shots & Demo. & \rand & \topk & \diversity & \topkdiv \\
\midrule
\multirow{4}{*}{Llama-3.1-8B} &\multirow{2}{*}{$K=4$} & PRM800K & 43.50 & 41.40 & \textbf{44.86} &  44.80  \\
& & \hlcella GSM8K & \hlcella 41.50 & \hlcella 41.00 & \hlcella \textbf{43.28} & \hlcella 42.00 \\
 &\multirow{2}{*}{$K=8$} & PRM800K & 43.32 & 43.00 & \textbf{44.28} & 40.00 \\
& & \hlcella GSM8K & \hlcella 41.66 & \hlcella 42.00 & \hlcella \textbf{43.46} & \hlcella 40.80 \\
\midrule
\multirow{4}{*}{Llama-3.1-70B} & \multirow{2}{*}{$K=4$} & PRM800K & 57.78 & 58.20 & 57.42 & \textbf{59.40} \\
& & \hlcella GSM8K & \hlcella 60.62 & \hlcella  \textbf{62.00} & \hlcella 61.88 & \hlcella 61.00 \\
 &\multirow{2}{*}{$K=8$} & PRM800K & 54.72 & \textbf{59.00} & 55.86 & 58.00 \\
& & \hlcella GSM8K & \hlcella \textbf{61.14} & \hlcella 59.60 & \hlcella 60.96 & \hlcella 60.00  \\
\midrule
\multirow{4}{*}{Gemma-2-9B} &\multirow{2}{*}{$K=4$} & PRM800K & 38.04 & 42.40 & 36.78 & \textbf{44.40} \\
& & \hlcella GSM8K & \hlcella \textbf{42.10} & \hlcella 41.00 & \hlcella 41.04 & \hlcella 42.20 \\
 &\multirow{2}{*}{$K=8$} & PRM800K & 40.66 & \textbf{46.20} & 39.20 & 44.40 \\
& & \hlcella GSM8K & \hlcella 42.06 & \hlcella 41.80 & \hlcella \textbf{42.74} & \hlcella 41.60 \\
\midrule
\multirow{4}{*}{Gemma-2-27B} &\multirow{2}{*}{$K=4$} & PRM800K & 46.06 & 49.20 & 47.80 & \textbf{49.60} \\
& & \hlcella GSM8K & \hlcella 46.06 & \hlcella 46.00 & \hlcella \textbf{46.30} & \hlcella 45.20 \\
 &\multirow{2}{*}{$K=8$} & PRM800K & 47.10 & \textbf{50.40}  & 47.04 & 48.60 \\
& & \hlcella GSM8K & \hlcella 45.40 & \hlcella 44.80 & \hlcella \textbf{45.92} & \hlcella 45.20 \\
\midrule
\multirow{2}{*}{Average} & \multirow{2}{*}{-} 
& PRM800K & 61.86 & \textbf{64.97} & 62.21 & 64.87 \\
& & \hlcella GSM8K & \hlcella 63.42 & \hlcella  63.03 & \hlcella \textbf{64.26} & \hlcella 63.00 \\
\bottomrule
\end{tabular}
\end{table}

To further verify the greater benefits of adding diversity in more challenging OOD scenarios, we designed experiments on math tasks with an increased degree of distribution shift.
We use GSM8K as the demonstration set and PRM800K~\citep{lightman2023lets} as the query set, establishing a significant OOD setting. We show the OOD results on Llama-3.1-8B/70B and Gemma-2-9B/27B instruct-tuned models. \Cref{tab:ood-math} summarizes our result. We observe that under this PRM800K OOD setting, methods incorporating diversity consistently demonstrate stronger robustness to this distribution shift. Even for Gemma models where \topk performs very well on ID tasks (demonstration and query set are all PRM800K), \topk is outperformed by diversity-aware methods on the OOD setting. A similar trend also holds for Llama models. One interesting finding is that for PRM800K, more demonstration might not lead to better performance, and also in our experiment, using GSM8K as demonstration works better than using PRM800K data as demonstrations.

For reading comprehension, switching to an OOD demonstration dataset does not significantly widen the gap between \diversity and \topk, but \diversity still outperforms \topk.
%



\begin{table}[!t]
    \centering
    \small
    \caption{Relative improvement of \topkdiv over \topk on GeoQuery and SQuAD on different sets of the queries.  For the GeoQuery dataset, we fine-tuned both base models on its training set. We categorized questions in testing set as ``Easy'' if the fine-tuned models correctly answered them in a 0-shot setting, and as ``Hard'' if these models failed to answer them correctly in the same 0-shot setting. We report the performance of both methods in a 4-shot setting. For SQuAD, we split the testing set only using the fine-tuned gemma-2-9B model, since fine-tuning the Llama-3.1-8B model yielded poor results. We observe that \topkdiv exhibits greater improvement on ``Hard'' examples.
}
    \label{tab:harder-example} 
    \begin{tabular}{llcccc} 
    \toprule
    \multirow{2}{*}{Split} & \multirow{2}{*}{Method} & \multicolumn{2}{c}{Gemma-2-9B} & \multicolumn{2}{c}{Lamma-3.1-8B} \\
    & & GeoQuery & SQuAD & GeoQuery & SQuAD \\
    \midrule
    \multirow{2}{*}{Easy} 
    & \topk & 72.09 &  83.01 & 79.31 &  81.04    \\
    & \topkdiv  & 77.91  &  82.66 & 83.71 &  79.65    \\
    \cline{1-6} 
    \multirow{2}{*}{Hard} 
    & \topk & 56.29 &  20.00 & 51.52 &  24.44    \\
    & \topkdiv  & 67.11 & 23.70  & 62.13 & 25.19     \\
    \bottomrule
    \end{tabular}
\end{table}

\textbf{Diversity performs better on harder examples.} 
Besides discussing the performance of diversity-aware methods (\topkdiv, \diversity, and even \rand) at task levels, we also analyze which specific examples benefit most from diversity.
For this, we first need to quantify the ``difficulty level'' of examples. 
Motivated by~\citep{DBLP:conf/emnlp/SwayamdiptaSLWH20}, we use whether a model can correctly answer a question after fine-tuning as an indicator of that question's difficulty for a specific language model. 
Therefore, we fine-tuned the corresponding base model on the dataset's training set using LoRA. 
Subsequently, based on whether this fine-tuned model could accurately answer questions in the testing set under a zero-shot setting, we classified these questions as ``easy'' or ``hard''.

We examine this phenomenon in \textsc{GeoQuery} and \textsc{SQuAD}, where \topkdiv consistently outperforms \topk. 
\Cref{tab:harder-example} shows that diversity yields greater benefits on harder examples. 
In \textsc{GeoQuery}, the absolute accuracy improvement of \topkdiv over \topk is 5.11\% on easy examples (averaged across two models), increasing to 10.72\% on hard examples. 
In \textsc{SQuAD}, while \topkdiv slightly underperforms \topk on easy examples by 0.87\%, it outperforms \topk on hard examples by 2.23\%.

\textbf{Diversity v.s. coverage.}
We analyze how diversity relates to coverage at both the example and task levels. 

At the example level, the phenomenon of ``diversity performs better on harder examples'' observed in GeoQuery (\Cref{tab:harder-example}) can be explained by enhanced coverage: harder examples often require covering more local structures~\citep{levy2023diverse,gupta2023coverage}, thus benefiting diversity-driven methods. 
However, in the case of SQuAD, this phenomenon likely arises from a different mechanism: when $k = 1$, \topk still underperforms \rand  /  \diversity methods. We speculate this is because the support in the original dataset contains a lot of noise, causing similar examples not only to fail to provide effective information but also potentially to mislead the model into focusing on noisy information (``coverage'' isn't helpful in such case).
%

\begin{table}[!t]
    \centering
    \small
    \caption{Results for SQuAD with cut perturbation. We performed content trimming on the support portion of the SQuAD dataset using Deepseek-r1, retaining only the top 1/3 most answer-relevant content. SQuAD-Cut refers to trimming applied solely to the testing set, while SQuAD-Both-Cut indicates trimming performed on both testing and training sets. The values in parentheses represent performance improvements relative to the original SQuAD dataset.}
    \label{tab:reading task squad perturbation}
    \begin{tabular}{ccccccc}
    \toprule
    \multirow{2}{*}{Model} & \multirow{2}{*}{$K$} & \multirow{2}{*}{Dataset} & \multicolumn{4}{c}{Method} \\
    & & & Rand & Topk & Div & Topk-Div \\
    \midrule
    \multicolumn{1}{c}{\multirow{9}{*}{\rotatebox[origin=c]{90}{Llama-3.1-8B}}} 
    & \multirow{3}{*}{1} 
    & SQuAD & 68.64 & 67.00 & 67.87 & 67.00 \\
    & & SQuAD-Cut & 69.43 (+0.79) & \textbf{68.20 (+1.20)} & 68.45 (+0.58) & 67.70 (+0.70) \\
    & & SQuAD-Both-Cut & 69.71 (+1.07) & \textbf{69.90 (+2.90)} & 69.47 (+1.60) & \textbf{69.90 (+2.90)} \\
    \cline{2-7}
    & \multirow{3}{*}{4} 
    & SQuAD & 75.95 & 73.51 & 75.66 & 73.28 \\
    & & SQuAD-Cut & 77.15 (+1.2) & 75.96 (+2.45) & 77.00 (+1.34) & \textbf{76.89 (+2.61)} \\
    & & SQuAD-Both-Cut & 76.95 (+1.00) & 76.15 (+2.64) & 77.76 (+2.10) & \textbf{76.47 (+3.19)} \\
    \cline{2-7}
    & \multirow{3}{*}{8} 
    & SQuAD & 77.13 & 75.52 & 77.71 & 76.13 \\
    & & SQuAD-Cut & 79.10 (+1.97) & 77.43 (+1.91) & \textbf{79.43 (+1.72)} & 78.64 (+2.51) \\
    & & SQuAD-Both-Cut & 79.39 (+2.26) & \textbf{78.66 (+3.14)} & 79.26 (+1.55) & 79.13 (+3.00) \\
    \midrule
    \multicolumn{1}{c}{\multirow{9}{*}{\rotatebox[origin=c]{90}{Gemma-2-9B}}} 
    & \multirow{3}{*}{1} 
    & SQuAD & 71.34 & 69.00 & 70.69 & 69.00 \\
    & & SQuAD-Cut & 72.96 (+1.62) & \textbf{71.20 (+2.20)} & 72.25 (+1.56) & 71.10 (+2.10) \\
    & & SQuAD-Both-Cut & 73.14 (+1.80) & \textbf{72.30 (+3.30)} & 72.67 (+1.98) & \textbf{72.30 (+3.30)} \\
    \cline{2-7}
    & \multirow{3}{*}{4} 
    & SQuAD & 77.19 & 74.82 & 77.06 & 75.05 \\
    & & SQuAD-Cut & 78.75 (+1.56) & 77.64 (+2.82) & 78.23 (+1.17) & \textbf{78.65 (+3.60)} \\
    & & SQuAD-Both-Cut & 78.72 (+1.53) & \textbf{77.47 (+2.65)} & 78.54 (+1.48) & 76.78 (+1.73) \\
    \cline{2-7}
    & \multirow{3}{*}{8} 
    & SQuAD & 79.23 & 77.59 & 79.05 & 77.64 \\
    & & SQuAD-Cut & 80.41 (+1.18) & 79.74 (+2.15) & 80.45 (+1.40) & \textbf{80.72 (+3.08)} \\
    & & SQuAD-Both-Cut & 80.22 (+0.99) & \textbf{79.05 (+1.46)} & 80.10 (+1.05) & 78.84 (+1.20) \\
    \midrule
    \multicolumn{1}{c}{\multirow{9}{*}{\rotatebox[origin=c]{90}{Mistral-7B-v0.3}}} 
    & \multirow{3}{*}{1} 
    & SQuAD & 69.12 & 66.30 & 67.80 & 66.30 \\
    & & SQuAD-Cut & 71.38 (+2.26) & \textbf{69.70 (+3.40)} & 69.21 (+1.41) & \textbf{69.70 (+3.40)} \\
    & & SQuAD-Both-Cut & 71.44 (+2.32) & \textbf{70.70 (+4.40)} & 72.00 (+4.20) & \textbf{70.70 (+4.40)} \\
    \cline{2-7}
    & \multirow{3}{*}{4} 
    & SQuAD & 76.70 & 75.04 & 75.96 & 74.43 \\
    & & SQuAD-Cut & 77.78 (+1.08) & 77.78 (+2.74) & 77.18 (+1.22) & \textbf{78.70 (+4.37)} \\
    & & SQuAD-Both-Cut & 77.76 (+1.06) & 77.92 (+2.88) & 77.89 (+1.93) & \textbf{77.40 (+2.97)} \\
    \cline{2-7}
    & \multirow{3}{*}{8} 
    & SQuAD & 77.30 & 77.05 & 77.67 & 77.44 \\
    & & SQuAD-Cut-R1 & 79.00 (+1.70) & \textbf{79.09 (+2.04)} & 78.64 (+0.97) & 79.07 (+1.63) \\
    & & SQuAD-Both-Cut-R1 & \textbf{79.92 (+2.62)} & 78.76 (+1.71) & 79.61 (+1.94) & 79.64 (+2.20) \\
    \bottomrule
    \end{tabular}
    \end{table}

To illustrate this ``beyond coverage'' effect, we conducted an experiment removing irrelevant noise from the SQuAD dataset.
Using DeepSeek-R1, we removed information irrelevant to the answer from the support passages in SQuAD, reducing content by approximately 50\%. Based on this, we constructed two variants: SQuAD-Cut, where only the training set is streamlined, and SQuAD-Both-Cut, where both the training and test sets are streamlined. As shown in \Cref{tab:reading task squad perturbation}, the more streamlined (i.e., higher-quality and less noisy) the dataset, the better the performance of \topk and \topkdiv. Notably, their improvement margins are significantly larger than that of \diversity (though still more than 1\% lower than \diversity). This indicates that when the dataset quality is higher, the ``Coverage'' mechanism can focus on high signal-to-noise ratio information (rather than incorrectly covering noise), and its effectiveness is significantly enhanced. \topk-based methods are more likely to “cover” high-quality information segments truly relevant to the answer, whereas \diversity, as an intrinsic metric, inherently includes effective mechanisms not directly dependent on precise semantic coverage (e.g., structural diversity: selecting examples with different sentence structures or argumentation styles). These mechanisms already play a role in the original noisy data, avoiding overfitting to noise, causing it to outperform noise-sensitive coverage strategies, and its baseline performance is already relatively robust. This fully demonstrates that the value of \emph{diversity} is ``beyond coverage''.

At the task level, as shown in \Cref{tab:best knn diversity}, increasing the number of shots (from 4 to 8) enhances the benefits of diversity. 
While coverage-based explanations focus on capturing local features or knowledge, this phenomenon suggests a ``beyond coverage'' benefit of diversity: when provided with more demonstrations, the model leverages diversified examples to better reconstruct and generalize the broader conceptual theme underlying the task. 
Intuitively, fewer shots limit the model's ability to form such a coherent conceptual understanding, making simple similarity or coverage sufficient. 
However, as the number of examples increases, adding diversity enables the model to abstract and synthesize the core task concepts more effectively, thereby enhancing performance beyond what mere coverage can explain.
In \Cref{sec:theory}, we provide theoretical justifications that further clarify why diversity can enhance performance through a mechanism that goes beyond feature coverage.

\subsection{Ablation studies}\label{sec:ablation}

We conduct experiments to observe how the improvement of diversity-aware methods over \topk changes if we change the size of the LLM used, since it is possible that, as the models scale up, ICL is not sensitive to data selection, and thus the improvement of diversity over pure similarity diminishes.

\begin{figure}[!t]
\centering
    \includegraphics[width=0.9\linewidth]{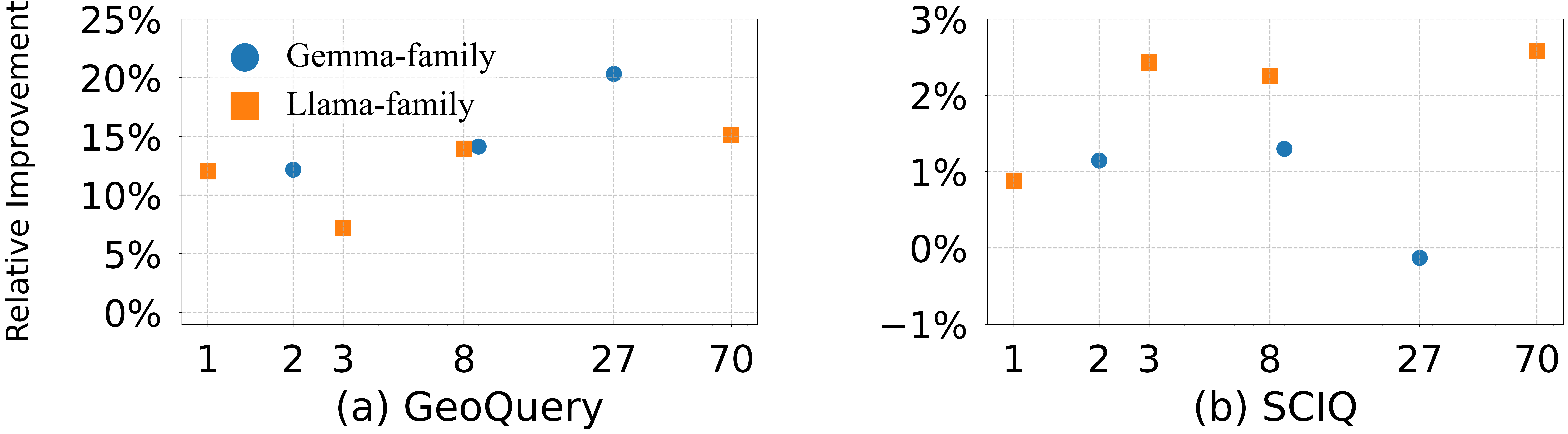}
    \caption{\small The relative improvement of diversity-aware methods over \topk.\textbf{ Left:} relative improvement of \topkdiv over \topk on GeoQuery standard split. \textbf{Right:} relative improvement of \diversity over \topk on SCIQ.}
    \label{fig:differ-scale}
\end{figure}

\Cref{fig:differ-scale} shows the relative improvement on GeoQuery standard split and SCIQ, where we observe the clear benefit of the diversity-aware method in \Cref{sec:main-findings}, on Llama-3.1/3.2 and Gemma-2 families with different sizes. We observe that in these two tasks, in general, the relative improvement does not decrease that much even if the model scales up, which indicates the importance of understanding the role of diversity in demonstration selection. 

In Appendix~\ref{sec:hyperparameter-ablation}, we present experiments on additional models.
We also report results across a range of settings, including fine-grained variations in $k$, different subset sizes for the \diversity method, fixed training sets, and changes in the embedding or decoding strategies.
In particular, we implement a \textbf{purely} diversity-based method, \kmeans, whose diversity score can exceed that of \diversity.
Its superior performance on Math and Reading tasks further supports Finding~1.
These ablation studies consistently reinforce our main findings, demonstrating the generality and robustness of our conclusions.

\section{Theoretical justification}\label{sec:theory}


In this section, we give a theoretical justification for combining diversity in demonstration selection for ICL, even if the ``embedding'' is accurate.


We consider the linear regression model, where there is a task vector $\theta_{\gT}\in\mathbb R^d$. The data for this task has embedded input $e\in\mathbb R^d$ and output $y = \langle \theta_{\gT}, e\rangle$. We also have a demonstration set $D = \{(e_i, y_i)\}_{i=1}^n$ with size $n$, where $y_i = \langle \theta_{\gT}, e_i\rangle$ and $e_i$ is drawn from the demonstration distribution $\gD_\gE$. Now given a query $e_q$ drawn from the query distribution $\gQ_\gE$, the goal of demonstration selection is to select a subset $S=\{(e_{j_i},y_{j_i})\}_{i=1}^K$, such that given the demonstrations $S=\{(e_{j_i},y_{j_i})\}_{i=1}^K$, the LLM predicts the output close to the gold label $y_q = \langle \theta_{\gT}, e_q\rangle$, i.e, $y_q \approx \text{LLM}(S, e_q)$. We make the following assumption on the mechanism of LLM for learning linear regression in-context, given the demonstration $S$ and the query $e_q$.

\begin{assumption}[\textbf{ICL for linear regression}]\label{ass:icl-lr}
    Suppose that the task $\gT$ is to predict the value of a linear function $y = \langle \theta_\gT, e\rangle$ and $K$ demonstrations $S=\{(e_{j_i},y_{j_i})\}_{i=1}^K$ are selected. Denote $E = [e_{j_1},\dots,e_{j_K}]^\top\in\mathbb R^{K\times d}$ as the data matrix. Then given a query $e_q$, we assume that the prediction given by the LLM is
    $\textstyle y_{\text{pred}} = \langle e_q, E^\dagger E\theta_{\gT}\rangle.$
    Namely, the LLM learns the min-norm solution for the overparameterized linear regression.
\end{assumption}


By this assumption, the prediction loss of $e_q$ is 
\[
\text{Loss}(e_q) := (y_{\text{pred}} - \langle \theta_\gT, e_q\rangle)^2 = \langle \theta_\gT - E^\dagger E\theta_{\gT}, e_q\rangle^2.
\]
ICL for linear regression has been extensively studied, empirically and theoretically (\Cref{sec:related-works}). \Cref{ass:icl-lr} is also empirically justified, where ~\citep{DBLP:conf/iclr/AkyurekSA0Z23} observed that after pretraining an autoregressive transformer model on noiseless linear regression tasks, the transformer will learn the min-norm solution for the linear regression in-context if the size of demonstrations $K < d$.

We further assume that the embedding for each data $e\in\{0,1\}^d$. This is inspired by the theoretical framework that each problem from a specific task contains certain skills (or local structures), and an LLM is able to solve that problem perfectly if the LLM knows all the skills (local structures) and is able to compose the skills (local structures) together~\citep{arora2023theory,DBLP:conf/iclr/Yu0GBGA24,zhao2025can}. For example, for a specific math problem related to algebra, the skills required to solve this problem are polynomial multiplication and solving equations, while for another math problem related to geometry, the skills required might be changed to coordinate systems and solving equations. It is also worth noting that the skill(local structure)-based embedding design also gains empirical success. For example, ~\citep{levy2023diverse,didolkar2024metacognitive} improves semantic parsing and ~\citep{an-etal-2023-skill} improves math ability by selecting demonstrations that require similar skills or local structures to the query.

\noindent \textbf{Example \rom{1}: Diversity benefits from coverage.}
We characterize the demonstration distribution $\gD_\gE$ and the query distribution $\gQ_\gE$ below.
Let $l \geq 200$ be an even number and let $d = 4l$, where the choice of 200 is to simplify the analysis.
Let $\gD_\gE$ be: Uniformly draw a subset $T_1\subseteq [2l]$ of size $l/2$ and a subset $T_2\subseteq \left\{2l+1,\ldots, 4l\right\}$ of size $l/2$, and output $e = e_{T_1\cup T_2}$, i.e., the $i$-th entry of $e$ is 1 iff $i\in T_1\cup T_2$.
Assume the size $n$ of $D$ is sufficiently large that $D$ covers the entire ground set of $\gD_\gE$.
Let $\gQ_\gE$ be: Uniformly draw a subset $T\subset [2l]$ of size $l$.
We have the following theorem, whose proof can be found in Appendix \ref{sec:proof_1}.

\begin{theorem}[\bf{Justification example \rom{1}}]
\label{thm:example_1}
Suppose each entry of $\theta_\gT$ is i.i.d. drawn from the uniform distribution on $[0,1]$.
Let $K = 2$ and $\gD_\gE, \gQ_\gE$ be as defined above.
For a query $e_q$ drawn from $\gQ_\gE$, let $L, L'$ denote the expected prediction loss of $e_q$ using \topk and \topkdiv, respectively, where the randomness comes from $\theta_\gT$, $e_q$, and the selection of demonstration examples.
Then $L> L'$ for any hyperparameter $\alpha\in (0,1)$ for \topkdiv.
\end{theorem}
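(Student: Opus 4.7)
I will reduce the comparison of $L$ and $L'$ to a single overlap parameter $c = |A_1 \cap A_2| + |B_1 \cap B_2|$, where $A_j \subseteq [2l]$ and $B_j \subseteq \{2l+1,\ldots,4l\}$ are the two halves of the support of the $j$-th selected demonstration. Every demonstration and every query has squared norm $l$, and $\texttt{Similarity}(e_{A \cup B}, e_T) = |A \cap T|/l \le 1/2$ with equality iff $A \subseteq T$, so both methods must pick from the tied family $\{(A, B) : A \subseteq T,\, |A| = l/2\}$, and I break ties uniformly at random. For \topkdiv with $\alpha \in (0,1)$, the second-pick objective reduces (after dropping constants) to maximizing $\alpha |A \cap T| - (1-\alpha)(|A \cap A_1| + |B \cap B_1|)$. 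Since $A_1 \subseteq T$, the per-element score on $[2l]$ is $\alpha$ on $T \setminus A_1$, $2\alpha - 1$ on $A_1$, and $0$ outside $T$, while on $\{2l+1,\ldots,4l\}$ it is $0$ outside $B_1$ and $-(1-\alpha)$ on $B_1$. The strict inequalities $\alpha > 2\alpha - 1$ and $\alpha > 0$ on $(0,1)$ force the unique optimum $A_2 = T \setminus A_1$ and $B_2 \cap B_1 = \emptyset$, yielding $c = 0$ deterministically; \topk with uniform tie-breaking, by contrast, typically produces $c$ of order $l$.

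\textbf{Closed-form prediction and conditional loss.} Next I invert the $2\times 2$ Gram matrix $EE^\top$ with diagonal $l$ and off-diagonal $c$ (nonsingular outside a measure-zero event) and use $\langle e_j, e_q\rangle = l/2$ for $j = 1,2$ to obtain
\[
\langle P\theta_\gT, e_q\rangle = \frac{l(y_1 + y_2)}{2(l + c)}.
\]
Writing $y_q = \sum_{i \in T} \theta_i$ and $y_1 + y_2 = \sum_{i \in T} n_i \theta_i + \sum_{i > 2l} m_i \theta_i$ with multiplicities $n_i, m_i \in \{0,1,2\}$ satisfying $\sum n_i = \sum m_i = l$, $\sum n_i^2 = l + 2|A_1 \cap A_2|$, $\sum m_i^2 = l + 2|B_1 \cap B_2|$, I condition on the selected supports so the $\theta_i$ are i.i.d.\ $U(0,1)$ with mean $1/2$ and variance $1/12$. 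A direct variance-plus-squared-mean calculation then yields
\[
\E\bigl[(y_q - \langle P\theta_\gT, e_q\rangle)^2 \bigm| c\bigr] = \frac{l(l + 2c)}{24(l+c)} + \frac{l^2 c^2}{4(l+c)^2}.
\]

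\textbf{Monotonicity, conclusion, and main obstacle.} Both summands are strictly increasing in $c \ge 0$ (the first has derivative $l^2/(24(l+c)^2) > 0$; the second vanishes only at $c = 0$), so the conditional loss equals $l/24$ iff $c = 0$. For \topkdiv this gives $L' = l/24$ deterministically; for \topk the event $\{c > 0\}$ has strictly positive probability (already $\{A_1 \cap A_2 \ne \emptyset\}$ has probability $1 - 1/\binom{l}{l/2}$ under uniform tie-breaking), so $L > l/24 = L'$. The only delicate step is the uniqueness of \topkdiv's second pick \emph{for every} $\alpha \in (0,1)$ simultaneously, which is precisely where the open-interval assumption enters via the strict positivity of $1 - \alpha$ and $\alpha$; everything else is Gram-matrix bookkeeping, linearity of variance, and the observation that the $[2l]\setminus T$ coordinates of $\theta_\gT$ never enter the loss since both $e_q$ and $P\theta_\gT$ vanish on them.
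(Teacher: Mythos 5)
Your proof is correct, and it follows the same high-level strategy as the paper's (force zero overlap for \topkdiv, condition on the overlap for \topk, compare conditional losses), but the execution is genuinely different and in fact cleaner. The paper conditions on the pair $(a,b)=(|A_1\cap A_2|,|B_1\cap B_2|)$ separately, computes the min-norm solution by Lagrange multipliers, and then concludes via a probabilistic bound $\Pr[a\le l/4\le b]\ge 1/4$ together with a comparison lemma $L_{a,b}>4L_{0,0}$ whose verification is where the assumption $l\ge 200$ enters. You instead observe that the conditional loss depends on the selection only through the single parameter $c=a+b$, derive the exact closed form $\frac{l(l+2c)}{24(l+c)}+\frac{l^2c^2}{4(l+c)^2}$ from the $2\times 2$ Gram matrix (I checked this against direct computation and it is right; it agrees with the paper at $c=0$, giving $l/24$), and finish by strict monotonicity in $c$ plus $\Pr[c>0]>0$ under uniform tie-breaking. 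This buys you two things: you never need the $l\ge 200$ hypothesis or any concentration/anticoncentration claim about the overlaps, and you avoid the paper's somewhat delicate step of lower-bounding the probability of a large-overlap event. Your treatment of the \topkdiv second pick (per-element scores $\alpha$, $2\alpha-1$, $0$, $-(1-\alpha)$, with strictness exactly from $\alpha\in(0,1)$) matches the paper's argument in substance but makes the uniqueness explicit. Two cosmetic remarks: the "measure-zero" language for nonsingularity of $EE^\top$ is out of place in this discrete setting --- singularity requires $c=l$, i.e., two identical demonstrations, which cannot occur when two distinct tied demonstrations are selected; and, like the paper, your argument for \topk implicitly fixes a uniform tie-breaking convention among the maximally similar demonstrations, which is consistent with the theorem's phrase "the randomness comes from \ldots the selection of demonstration examples" but worth stating as an explicit assumption.
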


Intuitively, the selected two demonstration examples of \topkdiv must cover all non-zero entries of $e_q$, while this property is unlikely to hold for \topk.
This demonstrates that adding diversity may increase the coverage of demonstration examples to queries and lead to a lower prediction loss, aligning with the findings in \cite{levy2023diverse,gupta2023coverage,ye2023complementary}.

\noindent \textbf{Example \rom{2}: Diversity is beyond coverage.}
We again characterize $\gD_\gE$ and $\gQ_\gE$ below.
Let $l \geq 3$ be an integer and let $d = 4l$.
Let $\gD_\gE$ be: Uniformly draw a subset $T_1\subseteq [2l]$ of size $l-1$ and a subset $T_2\subseteq \left\{2l+1,\ldots, 4l\right\}$ of size $1$, and output $e = e_{T_1\cup T_2}$.
Assume the size $n$ of $D$ is sufficiently large that $D$ covers the entire ground set of $\gD_\gE$.
Let $\gQ_\gE$ be: Uniformly draw a subset $T\subset [2l]$ of size $l$.
We have the following theorem for this example, whose proof can be found in Appendix \ref{sec:proof_2}.

\begin{theorem}[\bf{Justification example \rom{2}}]
\label{thm:example_2}
Suppose each entry of $\theta_\gT$ is i.i.d. drawn from the uniform distribution on $[0,1]$.
Let $K = 2$ and $\gD_\gE, \gQ_\gE$ be as defined above.
For a query $e_q$ drawn from $\gQ_\gE$, let $L, L'$ denote the expected prediction loss of $e_q$ using \topk and \topkdiv, respectively, where the randomness comes from $\theta_\gT$, $e_q$, and the selection of demonstration examples.
Then $L> L'$ if hyperparameter $\alpha\geq 1 - 1/l$ for \topkdiv.
\end{theorem}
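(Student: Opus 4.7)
First, I would reduce the expected loss to a simple functional of the subspace spanned by the two selected demonstrations. Under Assumption~\ref{ass:icl-lr}, the prediction error equals $\langle(I-P)e_q,\theta_\gT\rangle^2$ where $P$ is the orthogonal projection onto $\mathrm{span}(e_1,e_2)$. Because each entry of $\theta_\gT$ is independent uniform on $[0,1]$ with mean $1/2$ and variance $1/12$, averaging over $\theta_\gT$ gives
\[\mathbb{E}_{\theta_\gT}[\text{loss}]=\tfrac{1}{12}\|(I-P)e_q\|^2+\tfrac{1}{4}\bigl(\vone^{\top}(I-P)e_q\bigr)^2.\]
By the symmetry of $\gD_\gE$ and $\gQ_\gE$ under coordinate permutations that act separately on $[2l]$ and on $\{2l+1,\dots,4l\}$, I may condition on $T=[l]$ without loss of generality.

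Next I pin down the \topkdiv pair. Any candidate demonstration $(T',j')$ with $T'\subseteq[2l]$, $|T'|=l-1$, $j'\in\{2l+1,\dots,4l\}$ has similarity $|T'\cap T|/l$, which is maximized at $(l-1)/l$ precisely on the $2l^2$-element winner set $W=\{(T',j'):T'\subseteq T\}$. So for any $\alpha<1$ the first demo is drawn from $W$; write it $(T_1,j_1)$ with $T_1=T\setminus\{a\}$. For the second demo, choosing $j'\neq j_1$ strictly improves the score, and using the identity $|T_1\cap T'|=|T'\cap T|-\mathbf{1}[a\in T']$ the objective simplifies to $(1-\alpha)+\tfrac{1}{l}\bigl((2\alpha-1)\,|T'\cap T|+(1-\alpha)\,\mathbf{1}[a\in T']\bigr)$. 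Since $\alpha\geq 1-1/l$ makes both coefficients strictly positive, the maximum is attained exactly when $|T'\cap T|=l-1$ and $a\in T'$, which forces $T'=T\setminus\{b\}$ for some $b\in T_1$, so $T'\neq T_1$. Hence the selected pair $(e_1,e_2)$ always satisfies $\langle e_i,e_q\rangle=l-1$ and $\langle e_1,e_2\rangle=l-2$; solving the $2\times 2$ projection system yields $Pe_q=(e_1+e_2)/2$, and $(I-P)e_q$ has four nonzero entries of magnitude $1/2$ (positive at $a,b$, negative at $j_1,j_2$), so $L'=1/12$.

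I then compare with \topk, which draws two distinct demos uniformly at random from $W$. Splitting the $\binom{2l^2}{2}$ pairs according to whether the two demos agree on $T'$ and on $j$ produces three cases: (A) same $T'$, different $j$, with probability $\tfrac{2l-1}{2l^2-1}$; (B1) different $T'$, same $j$, with probability $\tfrac{l-1}{2l^2-1}$; (B2) both different, with the remaining probability. Cases A and B1 share $\langle e_1,e_2\rangle=l-1$, so their projections coincide with $Pe_q=\tfrac{l-1}{2l-1}(e_1+e_2)$ and identical loss $L_A=L_{B1}=\tfrac{3l-2}{12(2l-1)}+\tfrac{l^2}{4(2l-1)^2}$, while Case B2 duplicates the \topkdiv configuration and contributes loss $L_{B2}=1/12$. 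A short algebraic reduction gives $L_A-L_{B2}=\tfrac{5l^2-3l+1}{12(2l-1)^2}>0$ for all $l\geq 3$, and therefore $L-L'=(\Pr[A]+\Pr[B1])(L_A-L_{B2})>0$.

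The main obstacle is the case analysis in the second step: I must rule out every alternative second demo, including those with $|T'\cap T|<l-1$ or with $a\notin T'$, to confirm that the Case~1 maximizer is the unique optimum. The threshold $\alpha\geq 1-1/l$ is exactly what makes the two-term linear objective monotone in both arguments, so the verification collapses to a sign check on the coefficients $(2\alpha-1)$ and $(1-\alpha)$. The remaining projection and variance calculations are then routine because all pair configurations enter only through the $2\times 2$ Gram matrix $A^{\top}A$, whose off-diagonal takes one of just two values ($l-1$ or $l-2$).
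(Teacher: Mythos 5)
Your proof is correct and follows essentially the same route as the paper's: you identify that under $\alpha\geq 1-1/l$ the \topkdiv pair must have distinct $(l-1)$-parts and distinct singletons, enumerate the three tie-broken configurations available to \topk, and compare losses of the min-norm solution — your Cases B2, B1, A are exactly the paper's Cases 1, 2, 3, and your values $L'=L_{B2}=\tfrac{1}{12}$ and $L_A=L_{B1}=\tfrac{3l-2}{12(2l-1)}+\tfrac{l^2}{4(2l-1)^2}=\tfrac{9l^2-7l+2}{12(2l-1)^2}$ match the paper's $L_1,L_2,L_3$ (up to a typo in the paper's denominator for $L_2$). The only differences are refinements of presentation: you compute losses via the identity $\tfrac{1}{12}\|(I-P)e_q\|^2+\tfrac14(\vone^{\top}(I-P)e_q)^2$ instead of Lagrange multipliers, you make the uniform tie-breaking probabilities for \topk explicit where the paper only asserts that $L$ is ``a linear combination of $L_1,L_2,L_3$,'' and you note that Cases A and B1 share a Gram matrix and hence a loss.
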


The demonstration examples of \topk and \topkdiv must cover all non-zero entries of $e_q$.
The smaller loss of \topkdiv is caused by selecting two demonstration examples with different non-zero entries among $\left\{2l+1,\ldots, 4l\right\}$,
indicating that adding diversity could benefit ICL ``beyond coverage''. 

In Appendix \ref{sec:simulation}, we conduct simulations to validate that the advantage of \topkdiv\ over \topk, driven by coverage and beyond, extends to more general settings, including the ID setting (\(\gD_\gE = \gQ_\gE\)) and scenarios with different training scales for \( D \).

\section{Related works}\label{sec:related-works}

\paragraph{Demonstration selection}
Retrieval-based demonstration selection for ICL has long been studied, and the most notable methods are the \emph{similarity}-based methods~\citep{liu2021makes,DBLP:conf/aaai/YangGW0L0W22,wu2023self,qin2023context}. These are often augmented by trainable deep learning retrievers aimed at capturing core skills or features beyond mere semantic similarity~\citep{karpukhin2020dense,rubin2022learning,luo2023dr,scarlatos2023reticl,an-etal-2023-skill}, or by incorporating LLM feedback for refinement~\citep{li2023finding,chen2023relation,wang2023large}. Conversely, diversity-based, or more accurately,   coverage-based methods are less prevalent in retrieval-based selection. Existing studies in this vein typically address tasks with clear local structures where feature coverage is advantageous~\citep{levy2023diverse,ye2023complementary,gupta2023coverage,DBLP:conf/acl/AnLFC0L023}. 
More recently, \citep{zhan2025mmragmultimoderetrievalaugmentedgeneration} demonstrated the successful application of diversity-based methods in the biomedical domain, achieving superior performance over \topk strategies on more complex datasets. 
This also supports our Finding 1.

For non-retrieval-based ICL, where a fixed set of demonstrations is selected for a specific task, diversity is recognized as beneficial~\citep{DBLP:conf/iclr/0001Z0S23,gao2023constructing,su2023selective,yang2023representative}.

\paragraph{Understanding in-context learning}
Efforts to understand ICL span both theoretical and empirical investigations. Theoretical perspectives often frame ICL as either a Bayesian inference procedure~\citep{DBLP:conf/iclr/XieRL022,wanglarge,wies2023learnability,jiang2023latent,zhang2023and} or an implicit form of meta-optimization akin to gradient descent~\citep{dai2023can,von2023transformers,von2023uncovering,deutch2024context,shen2023pretrained}. Research on ICL for regression tasks~\citep{garg2022can,li2023transformers,li2023closeness,DBLP:conf/iclr/AkyurekSA0Z23} provides valuable insights; notably, \citep{DBLP:conf/iclr/AkyurekSA0Z23} suggest transformers can identify min-norm solutions in-context for linear regression, a finding that supports the role of demonstration diversity. Empirical studies have further examined factors such as input-label mapping~\citep{min2022rethinking,yoo2022ground,pan2023context}, the influence of demonstration order~\citep{lu2022fantastically,liu2024lost}, and the importance of calibration for ICL efficacy~\citep{zhao2021calibrate}.

\section{Conclusion, limitations, and future works}
\label{sec:conclusion}

We study the role of diversity in retrieval-based demonstration selection for in-context learning. By experimenting on different tasks, ranging from sentiment classification to math, on Llama-3, Gemma-2, and Mistral-v0.3 families, we observe the benefits of diversity when the task is challenging, the query is hard, and the query and demonstration might come from different distributions. These findings are also validated by ablation studies. 
We also provide theoretical justification for the benefits of diversity over similarity. 
Our findings and theoretical justification can help people better understand the role of diversity in demonstration selection for ICL.


However, the internal mechanism behind why diversity benefits, still remains unclear. Part of our findings can be explained by coverage, which is aligned with previous literature, but the superior performance on math, reading comprehension, and OOD generalization, cannot be explained by simply incentivizing coverage. 
Potential future research directions include both theoretical and empirical explorations into why diversity aids demonstration selection beyond coverage. This could involve deeper analysis of model representations, interactions between diverse demonstrations, or alternative explanations grounded in information theory or representation learning. 
Additionally, our diversity heuristic is tested on English text only; cross-lingual robustness is left for future work.

Our study advances the understanding of various data selection methods for in-context learning and may inform the design of more effective algorithms for future large language models, thereby contributing to positive societal impact.

\bibliographystyle{plainnat}
\bibliography{references}

\newpage
\appendix
\onecolumn

\tableofcontents

\newpage

\section{More experiment details}\label{sec:more-exp-details}

\subsection{Prompt template}

\Cref{tab:prompt-template} lists the template we use for different tasks. We take $K=2$ as an example.

\begin{table*}[htbp]
    \small
    \centering
    \caption{Prompt template for different tasks with 2 demonstrations. For Math problems, we also apply the chat template since we use the instruct models (done by applying the function ``apply\_chat\_template'' on the instruct models' tokenizer).}
    \label{tab:prompt-template}
    \begin{adjustbox}{width=0.9\textwidth,totalheight=8in}
    \begin{tabular}{m{80pt}|m{320pt}}
       Name & Template \\
       \hline
Sentiment Classification (SST-2, IMDB, Amazon) & \begin{lstlisting}[language=prompt,belowskip=-6pt]
Question: (@\intrprmpt{input\_1}@)
Answer: (@\intrprmpt{output\_1}@)

Question: (@\intrprmpt{input\_2}@)
Answer: (@\intrprmpt{output\_2}@)

Question: (@\intrprmpt{input\_query}@)
Answer:
\end{lstlisting} \\
\hline
Commonsense Reasoning (ARC-Easy, CsQA) & \begin{lstlisting}[language=prompt,belowskip=-6pt]
Question: (@\intrprmpt{input\_1}@)
Answer: (@\intrprmpt{output\_1}@)

Question: (@\intrprmpt{input\_2}@)
Answer: (@\intrprmpt{output\_2}@)

Question: (@\intrprmpt{input\_query}@)
Answer:
\end{lstlisting} \\
\hline
Reading Comprehension (SQuAD, SCIQ) & \begin{lstlisting}[language=prompt,belowskip=-6pt]
Support: (@\intrprmpt{support\_1}@)
Question: (@\intrprmpt{input\_1}@)
Answer: (@\intrprmpt{output\_1}@)

Support: (@\intrprmpt{support\_2}@)
Question: (@\intrprmpt{input\_2}@)
Answer: (@\intrprmpt{output\_2}@)

Support: (@\intrprmpt{support\_query}@)
Question: (@\intrprmpt{input\_query}@)
Answer:
\end{lstlisting} \\
\hline
text to SQL (GeoQuery) & \begin{lstlisting}[language=prompt,belowskip=-6pt]
Question: (@\intrprmpt{input\_1}@)
Answer: (@\intrprmpt{output\_1}@)

Question: (@\intrprmpt{input\_2}@)
Answer: (@\intrprmpt{output\_2}@)

Question: (@\intrprmpt{input\_query}@)
Answer:
\end{lstlisting} \\
\hline
Math (GSM8K, PRM800K) & \begin{lstlisting}[language=prompt,belowskip=-6pt]
Question: (@\intrprmpt{input\_1}@)
Answer: (@\intrprmpt{output\_1}@)

Question: (@\intrprmpt{input\_2}@)
Answer: (@\intrprmpt{output\_2}@)

Let's think step by step. You need to solve the final question and answer in the format: \n#### \{result\}
Question: (@\intrprmpt{input\_query}@)
Answer:
\end{lstlisting} \\
    \end{tabular}
    \end{adjustbox}
\end{table*}

\subsection{Dataset details}
\begin{table*}[!t]
\centering
\small
\caption{\textbf{Detailed dataset size before and after sampling.} We show the original and sampled size of demonstration set and test set for all dataset we considered.}
\label{tab:dataset-size}
\begin{adjustbox}{width=\textwidth}
\begin{tabular}{ccccccccccccc}
\toprule
\multirow{2}{*}{Dataset size} & \multicolumn{3}{c}{Classification} & \multicolumn{2}{c}{Multi-choice} & \multicolumn{3}{c}{Math} & Code & \multicolumn{2}{c}{Reading} & \\
& SST-2 & Amazon & Imdb& ARC-Easy & CsQA & PRM800K & GSM8K & GSM-Plus-Mini & GeoQuery & SQuAD & SCIQ & \\
\midrule
Sampled demo set & 1000& 1000 & 1000 & 1000 & 1000 & 12000 & 7473 & 7473 & 600 & 10000 & 1000 \\
Sampled test set & 1000& 1000 & 1000 & 1000 & 1000 & 500 & 1319 & 2400 & 280 & 1000 & 1000 \\
\midrule
Original demo set & 67300& 3600000 & 25000 & 2250 &  9740 & 12000 & 7473 & 7473 & 600 & 87600 & 11700 \\
Original test set & 1820& 400000 & 25000 &  2380 & 1140 & 500 & 1319 & 2400 & 280 & 10600 & 1000 \\
\bottomrule
\end{tabular}
\end{adjustbox}
\end{table*}

To reduce computational cost, we performed random sampling on both the \emph{demo} and \emph{test} set for classification, multi-choice and reading tasks.
For classification tasks, the sampled datasets from IMDB and SST2 are consistent with ~\cite{chang2023data}.
A fixed random seed of 42 was used for all sampling procedures.
For math tasks, since the \emph{test} set sizes of PRM800K and GSM8K datasets are close to the sampled \emph{test} set sizes of other tasks, we directly used their existing \emph{demo} and \emph{test} set.
Detailed sampling statistics are provided in \Cref{tab:dataset-size}.

\subsection{Evaluation details}

For the sentiment classification task (classification), given the prompt listed in \Cref{tab:prompt-template}, we compute the logit for ``great'' and ``terrible'' respectively, and predict the sentiment to be positive is the logit for ``great'' is larger than that for ``terrible'', and vice versa. We report the accuracy metric.

For commonsense reasoning tasks (multi-choice), given the prompt, we compute the average cross-entropy loss on each given option, conditioned on the prompt. Then we pick the option with the smallest average cross-entropy loss. We report the accuracy metric.

For reading comprehension (generation), given the prompt, we generate the answer using greedy decoding. We stop if we generate one of the following string: ``\textbackslash n\textbackslash n'', ``\textbackslash n\textbackslash n\textbackslash n'', "Support", "Support:", "Question", "Question:". We compare the generated answer with the gold answer, and report the exact match metric. There are several optional answers for the squad test sample, if the generated answer exactly matches one of them, we consider it correct.

For text to SQL (generation), given the prompt, we generate the answer using greedy decoding. We stop if we generate one of the following string: ``\textbackslash n\textbackslash n'', ``\textbackslash n\textbackslash n\textbackslash n'', "Question", "Question:". We compare the generated answer with the gold answer, and report the exact match metric.

For math problem (generation), given the prompt, we generate the answer using greedy decoding. We do not stop the generation process unless the instruct model generates the stop sign itself. We first try to extract the math expression from the following format ``\#\#\#\# \{expression\}''. If failed, we try to extract from the following format ``\textbackslash\{boxed\}\{expression\}''. If both failed, we extract the final math expression from the answer. The report exact match metric.

For each task, the selected examples in TopK and TopK-Div are fixed, and these two methods are tested once.
For Rand and Div, where example selection involves randomness, we test with ten random seeds and report the average results.



%
%
%

\section{Additional ablation studies}\label{sec:hyperparameter-ablation}

\subsection{Results on more models}\label{sec:res-full}
\begin{table*}[!t]
\centering
\small
\caption{We supplemented the  content omitted in \Cref{tab:main-res}. The main numerical values represent the mean results over ten random seeds, while the subscript indicates their $\mathrm{std}$. We still highlight the result with the highest mean in bold. In most cases, the fluctuations within each method do not affect our conclusions.
}
\label{tab:detailed-main-res}
\begin{adjustbox}{width=0.95\textwidth}
\begin{tabular}{cccccccccccc}
\toprule        
\multirow{2}{*}{Model} & \multicolumn{1}{c}{\multirow{2}{*}{$K$}} & \multirow{2}{*}{Method}  & \multicolumn{2}{c}{Classification} & \multicolumn{2}{c}{Multi-choice}  & \multicolumn{2}{c}{Math} & Code & \multicolumn{2}{c}{Reading}\\
& & & SST-2 & Amazon & ARC-Easy & CsQA & GSM8K & GSM-Plus-Mini & GeoQuery & SQuAD & SCIQ \\
\midrule
\parbox[t]{3mm}{\multirow{8}{*}{\rotatebox[origin=c]{90}{Llama-3.1-8B}}} 
& \multirow{1}{*}{0} 
& - & $87.50$ & $95.40$ & $82.43$ & $62.80$ & $53.45$ & $65.12$ & --- & $42.30$& $36.40$\\
\cline{2-12}
& \multirow{4}{*}{4} 
& \rand & $91.31_{0.59}$ & $\mathbf{96.38_{0.25}}$ & $84.72_{0.35}$& $71.15_{0.65}$& $\mathbf{82.24_{0.52}}$& $66.90_{0.59}$ &  $12.57_{1.33}$& $\mathbf{75.95_{0.55}}$& $74.00_{0.57}$\\
& & \topk &  $\mathbf{94.13_{0.21}}$&$96.24_{0.16}$&   $\mathbf{86.10_{0.32}}$& $72.54_{0.36}$& $81.99_{0.55}$ & $65.30_{0.46}$ & $63.04_{1.96}$& $73.51_{0.48}$& $72.70_{0.40}$\\
& & \diversity & $91.50_{0.63}$&$96.18_{0.25}$ & $85.06_{0.27}$&  $71.17_{0.42}$&  $82.14_{0.45}$&  $\mathbf{66.92_{0.52}}$ &  $33.71_{1.35}$& $75.66_{0.97}$& $\mathbf{74.47_{0.62}}$\\
& & \topkdiv& $92.75_{0.33}$ & $96.15_{0.22}$&  $85.83_{0.38}$& $\mathbf{72.57_{0.35}}$& $81.74_{0.53}$& $66.12_{0.85}$ &  $\mathbf{71.07_{1.11}}$&$73.28_{0.79}$ & $73.87_{0.35}$\\
\cline{2-12}
& \multirow{4}{*}{8} 
& \rand & $92.27_{0.55}$ &$\mathbf{96.63_{0.27}}$ & $84.38_{0.34}$& $72.23_{0.34}$& $82.81_{0.61}$ & $\mathbf{66.72_{0.72}}$ & $23.21_{1.41}$& $77.13_{0.80}$& $74.65_{0.88}$\\
& & \topk & $\mathbf{93.64_{0.36}}$&$96.12_{0.09}$ & $\mathbf{85.91_{0.29}}$& $\mathbf{73.91_{0.38}}$& $82.26_{0.65}$ & $65.99_{0.60}$ & $72.04_{0.93}$&  $75.52_{0.43}$& $74.72_{0.65}$\\
& & \diversity & $92.95_{0.35}$ & $96.25_{0.19}$ & $84.97_{0.32}$& $72.77_{0.61}$& $\mathbf{82.98_{0.34}}$ & $66.56_{0.60}$ & $38.54_{0.90}$& $\mathbf{77.71_{0.80}}$& $\mathbf{75.17_{0.53}}$ \\
& & \topkdiv & $93.33_{0.36}$ & $96.43_{0.09}$ & $85.39_{0.40}$ & $73.76_{0.37}$ & $82.63_{0.57}$ & $66.48_{0.52}$ & $\mathbf{78.36_{1.24}}$ & $76.13_{0.42}$ & $75.07_{0.49}$\\
\midrule
\parbox[t]{3mm}{\multirow{8}{*}{\rotatebox[origin=c]{90}{Gemma-2-9B}}} 
& \multirow{1}{*}{0} 
& - & $67.50$ & $85.10$ & $88.15$ & $61.80$ & $16.07$& $32.79$ & ---& $37.90$& $41.10$\\
\cline{2-12}
& \multirow{4}{*}{4} 
& \rand & $93.33_{0.52}$& $96.15_{0.23}$& $89.52_{0.25}$& $74.70_{0.70}$& $84.29_{0.43}$ & $74.40_{0.47}$ & $13.89_{1.67}$& $\mathbf{77.19_{0.89}}$& $75.80_{0.54}$\\
& & \topk & $\mathbf{94.47_{0.48}}$& $96.34_{0.20}$& $\mathbf{90.50_{0.16}}$& $75.19_{0.25}$& $84.25_{0.73}$& $\mathbf{74.50_{0.55}}$ & $61.14_{1.33}$& $74.82_{0.70}$& $75.24_{0.34}$\\
& & \diversity & $93.45_{0.46}$& $95.69_{0.23}$& $90.03_{0.24}$& $74.85_{0.39}$& $\mathbf{84.44_{0.91}}$& $73.34_{0.62}$ &$36.29_{1.05}$ & $77.06_{0.57}$& $\mathbf{75.96_{0.55}}$\\
& & \topkdiv& $93.34_{0.34}$& $\mathbf{96.57_{0.16}}$& $90.19_{0.19}$& $\mathbf{75.60_{0.54}}$& $83.54_{0.56}$& $74.47_{0.63}$ & $\mathbf{70.43_{1.24}}$& $75.05_{0.41}$& $75.21_{0.29}$\\
\cline{2-12}
& \multirow{4}{*}{8} 
& \rand & $93.30_{0.36}$& $96.09_{0.23}$& $89.39_{0.28}$& $75.98_{0.56}$& $\mathbf{84.34_{0.54}}$& $74.48_{0.63}$ & $24.36_{1.19}$& $\mathbf{79.23_{0.64}}$& $76.28_{0.50}$\\
& & \topk & $\mathbf{94.20_{0.28}}$& $96.55_{0.16}$& $\mathbf{90.62_{0.16}}$& $76.14_{0.63}$& $83.57_{0.53}$& $\mathbf{75.36_{0.43}}$ & $71.00_{1.20}$& $77.59_{0.42}$& $75.55_{0.18}$\\
& & \diversity & $93.41_{0.20}$& $95.94_{0.25}$& $89.90_{0.19}$& $\mathbf{76.60_{0.32}}$& $84.22_{0.52}$& $74.69_{0.64}$ & $42.07_{1.10}$& $79.05_{0.93}$& $\mathbf{76.65_{0.60}}$\\
& & \topkdiv & $94.04_{0.29}$& $\mathbf{96.58_{0.04}}$& $90.48_{0.22}$& $76.53_{0.21}$ & $83.85_{0.66}$& $75.16_{0.32}$ &$\mathbf{76.32_{0.85}}$ & $77.64_{0.63}$& $76.24_{0.48}$\\
\midrule
\parbox[t]{3mm}{\multirow{8}{*}{\rotatebox[origin=c]{90}{Mistral-7B-v0.3}}} 
& \multirow{1}{*}{0} 
& - & $66.50$ & $94.00$ & $76.41$ & $51.80$& $9.48$ & $5.17$ & --- &$30.50$ &$34.20$ \\
\cline{2-12}
&  \multirow{4}{*}{4} 
& \rand & $91.00_{0.78}$& $94.02_{0.61}$& $82.77_{0.48}$ & $69.83_{0.81}$ & $48.78_{1.00}$ &  $37.20_{0.69}$ & $12.14_{1.47}$&  $\mathbf{76.70_{0.72}}$ & $74.71_{0.54}$\\
& & \topk & $\mathbf{93.57_{0.25}}$& $\mathbf{96.17_{0.20}}$& $\mathbf{85.21_{0.30}}$ & $69.73_{0.43}$ & $49.28_{1.17}$ & $38.20_{0.55}$ & $60.14_{0.82}$&  $75.04_{0.74}$ & $73.73_{0.59}$\\
& & \diversity & $91.98_{0.46}$& $94.15_{0.31}$& $82.98_{0.25}$ & $\mathbf{70.15_{0.56}}$ & $49.49_{0.87}$ & $37.50_{0.76}$ &  $34.89_{1.39}$& $75.96_{1.08}$ & $\mathbf{75.83_{0.57}}$ \\
& & \topkdiv& $92.73_{0.30}$& $95.90_{0.15}$& $84.55_{0.20}$ & $69.91_{0.49}$ & $\mathbf{49.99_{1.02}}$ & $\mathbf{38.45_{0.81}}$ &$\mathbf{71.46_{1.35}}$ &   $74.43_{0.50}$& $73.16_{0.28}$\\
\cline{2-12}
& \multirow{4}{*}{8} 
& \rand & $92.49_{0.34}$& $95.35_{0.36}$& $83.69_{0.36}$ & $71.65_{0.60}$ &$47.86_{1.19}$ & $36.32_{0.71}$ & $22.18_{1.96}$& $77.30_{0.54}$& $75.54_{0.63}$\\
& & \topk & $\mathbf{93.61_{0.32}}$& $\mathbf{96.15_{0.16}}$& $\mathbf{85.17_{0.25}}$ &  $71.88_{0.38}$& $48.43_{1.02}$ & $37.35_{0.53}$ & $70.50_{1.36}$& $77.05_{0.41}$ &  $75.44_{0.43}$\\
& & \diversity & $92.55_{0.29}$ & $95.10_{0.37}$ & $84.27_{0.41}$ & $\mathbf{72.04_{0.61}}$ & $48.33_{1.10}$ & $36.12_{0.34}$ & $39.14_{1.40}$ & $\mathbf{77.67_{1.56}}$  & $\mathbf{76.30_{0.31}}$\\
& & \topkdiv & $93.47_{0.41}$ & $96.11_{0.16}$ & $84.85_{0.34}$ & $71.81_{0.19}$ & $\mathbf{48.60_{0.71}}$ & $\mathbf{37.81_{0.76}}$ & $\mathbf{77.93_{1.70}}$ &   $77.44_{0.37}$ & $75.22_{0.42}$\\
\bottomrule
\end{tabular}
\end{adjustbox}
\end{table*}

\begin{table*}[htbp]
    \centering
    \small
    \caption{Performance of different algorithms for models belong to Llama-family. Setting same as \Cref{tab:main-res} while adding results from more models (Llama-3.2-1B, Llama-3.2-3B, Llama-3.1-70B). Our finding that diversity helps for more challenging tasks still holds.}
    \label{tab:Llama-family}
    \begin{adjustbox}{width=\textwidth}
    \begin{tabular}{cccccccccccc}
    \toprule
    \multirow{2}{*}{model} & \multirow{2}{*}{$K$} & \multirow{2}{*}{Method}  & \multicolumn{2}{c}{Classification} & \multicolumn{2}{c}{Multi-choice}  & \multicolumn{2}{c}{Math} & Code & \multicolumn{2}{c}{Reading}\\
    & & & SST-2 & Amazon & Arc-easy & CsQA & GSM8K & GSM-Plus-Mini & GeoQuery & SQuAD & SCIQ \\
    \midrule
    \parbox[t]{3mm}{\multirow{8}{*}{\rotatebox[origin=c]{90}{Llama-3.2-1B}}} & \multirow{4}{*}{4} 
    & \rand & $86.88_{0.49}$ & $90.64_{0.43}$ & $71.65_{0.46}$ & $59.54_{0.63}$ & $\mathbf{22.18_{0.81}}$ & \textbf{$12.62_{0.56}$}& $7.43_{1.27}$& $56.17_{0.75}$& $62.75_{0.84}$\\
    & & \topk & $\mathbf{91.87_{0.49}}$ & $\mathbf{93.46_{0.25}}$ & $\mathbf{75.28_{0.57}}$ & $60.20_{0.46}$ & $19.57_{0.71}$ & $9.41_{0.60}$& $48.25_{0.94}$& $55.09_{0.38}$& $\mathbf{63.65_{0.36}}$\\
    & & \diversity & $88.35_{1.19}$ & $91.14_{0.43}$ & $73.52_{0.54}$ & $60.60_{0.65}$ & $21.29_{1.05}$ & $\mathbf{13.16_{0.73}}$& $29.46_{1.70}$& $55.40_{1.18}$& $63.38_{0.54}$\\
    & & \topkdiv & $91.47_{0.60}$ & $93.22_{0.33}$ & $74.62_{0.44}$ & $\mathbf{60.89_{0.36}}$ & $20.30_{0.80}$ & $9.35_{0.53}$& $\mathbf{56.57_{1.18}}$& $\mathbf{56.39_{0.96}}$& $62.96_{0.31}$\\
    \cline{2-12}
    & \multirow{4}{*}{8} 
    & \rand & $89.56_{0.81}$ & $92.62_{0.39}$ & $72.72_{0.32}$ & $61.27_{0.62}$ & $21.04_{0.63}$ & $10.01_{0.44}$& $13.04_{1.71}$& $\mathbf{58.76_{0.56}}$& $65.05_{0.45}$\\
    & & \topk & $\mathbf{92.91_{0.30}}$ & $93.95_{0.24}$ & $\mathbf{75.41_{0.40}}$ & $61.77_{0.35}$ & $16.58_{0.53}$ & $7.12_{0.44}$& $56.29_{1.94}$& $58.38_{0.66}$& $65.87_{0.45}$\\
    & & \diversity & $87.96_{1.14}$ & $92.72_{0.34}$ & $73.53_{0.45}$ & $\mathbf{62.24_{0.55}}$ & $\mathbf{22.24_{0.93}}$ & $\mathbf{11.73_{1.55}}$& $32.75_{1.46}$& $58.37_{1.16}$& $65.89_{0.94}$\\
    & & \topkdiv & $92.30_{0.38}$ & $\mathbf{94.06_{0.20}}$ & $74.74_{0.28}$ & $62.20_{0.50}$ & $16.00_{0.81}$ & $6.45_{0.46}$& $\mathbf{65.11_{1.63}}$ & $58.27_{0.74}$ & $\mathbf{66.24_{0.57}}$\\
    \midrule
    \parbox[t]{3mm}{\multirow{8}{*}{\rotatebox[origin=c]{90}{Llama-3.2-3B}}} & \multirow{4}{*}{4} 
    & \rand & $90.40_{0.66}$ & $95.87_{0.21}$ & $78.62_{0.44}$ & $68.51_{0.64}$ & $69.64_{0.98}$ & $50.50_{0.59}$& $9.86_{1.28}$& $\mathbf{71.59_{0.60}}$ & $72.43_{0.70}$\\
    & & \topk & $92.87_{0.31}$ & $96.25_{0.24}$ & $81.51_{0.34}$ & $68.72_{0.37}$ & $\mathbf{70.05_{0.86}}$ & $50.10_{0.53}$& $54.04_{1.68}$& $71.21_{0.48}$& $70.87_{0.46}$\\
    & & \diversity & $90.87_{0.59}$ & $95.57_{0.18}$ & $80.41_{0.60}$ & $68.80_{0.57}$ & $68.71_{0.75}$ & $\mathbf{51.53_{0.87}}$& $31.21_{1.82}$& $71.13_{1.42}$& $\mathbf{72.58_{0.61}}$\\
    & & \topkdiv & $\mathbf{93.03_{0.29}}$ & $\mathbf{96.43_{0.15}}$ & $\mathbf{81.71_{0.30}}$ & $\mathbf{68.95_{0.25}}$ & $69.14_{0.86}$ & $50.60_{0.38}$& $\mathbf{59.75_{2.03}}$& $70.73_{0.49}$& $71.28_{0.40}$\\
    \cline{2-12}
    & \multirow{4}{*}{8} 
    & \rand & $91.79_{0.36}$ & $96.09_{0.14}$ & $78.91_{0.40}$ & $69.89_{0.68}$ & $68.79_{0.94}$ & $ \mathbf{51.12_{0.81}}$ & $19.29_{1.60}$& $73.14_{0.63}$& $72.57_{0.85}$ \\
    & & \topk & $\mathbf{93.71_{0.40}}$ & $96.18_{0.20}$ & $81.03_{0.36}$ & $\mathbf{70.53_{0.33}}$ & $\mathbf{69.40_{0.89}}$ & $50.00_{0.84}$ & $61.89_{1.49}$& $72.61_{0.30}$& $71.83_{0.46}$\\
    & & \diversity & $91.99_{0.47}$ & $95.83_{0.27}$ & $80.68_{0.34}$ & $70.26_{0.40}$ & $66.54_{1.14}$ & $50.87_{0.61}$& $36.71_{1.39}$& $72.76_{1.53}$& $\mathbf{74.07_{0.49}}$\\
    & & \topkdiv & $93.55_{0.35}$ & $\mathbf{96.37_{0.17}}$ & $\mathbf{81.57_{0.30}}$ & $70.18_{0.33}$ & $69.38_{0.98}$ & $49.96_{0.73}$& $\mathbf{72.11_{1.77}}$& $\mathbf{73.80_{0.56}}$ & $72.08_{0.53}$\\
    \midrule
    \parbox[t]{3mm}{\multirow{8}{*}{\rotatebox[origin=c]{90}{Llama-3.1-8B}}} & \multirow{4}{*}{4} 
    & \rand & $91.31_{0.59}$ & $\mathbf{96.38_{0.25}}$ & $84.72_{0.35}$ & $71.15_{0.65}$ & $\mathbf{82.24_{0.52}}$ & $66.90_{0.59}$& $12.57_{1.33}$& $\mathbf{75.95_{0.55}}$& $74.00_{0.57}$\\
    & & \topk & $\mathbf{94.13_{0.21}}$ & $96.24_{0.16}$ & $\mathbf{86.10_{0.32}}$ & $72.54_{0.36}$ & $81.99_{0.55}$ & $65.30_{0.46}$& $63.04_{1.96}$& $73.51_{0.48}$& $72.70_{0.40}$\\
    & & \diversity & $91.50_{0.63}$ & $96.18_{0.25}$ & $85.06_{0.27}$ & $71.17_{0.42}$ & $82.14_{0.45}$ & $\mathbf{66.92_{0.52}}$& $33.71_{1.35}$& $75.66_{0.97}$& $\mathbf{74.47_{0.62}}$\\
    & & \topkdiv & $92.75_{0.33}$ & $96.15_{0.22}$ & $85.83_{0.38}$ & $\mathbf{72.57_{0.35}}$ & $81.74_{0.53}$ & $66.12_{0.85}$& $\mathbf{71.07_{1.11}}$& $73.28_{0.79}$& $73.87_{0.35}$\\
    \cline{2-12}
    & \multirow{4}{*}{8} 
    & \rand & $92.27_{0.55}$ & $\mathbf{96.63_{0.27}}$ & $84.38_{0.34}$ & $72.23_{0.34}$ & $82.81_{0.61}$ & $\mathbf{66.72_{0.72}}$& $23.21_{1.41}$ & $77.13_{0.80}$& $74.65_{0.88}$\\
    & & \topk & $\mathbf{93.64_{0.36}}$ & $96.12_{0.09}$ & $\mathbf{85.91_{0.29}}$ & $\mathbf{73.91_{0.38}}$ & $82.26_{0.65}$ & $65.99_{0.60}$& $72.04_{0.93}$ & $75.52_{0.43}$& $74.72_{0.65}$\\
    & & \diversity & $92.95_{0.35}$ & $96.25_{0.19}$ & $84.97_{0.32}$ & $72.77_{0.61}$ & $\mathbf{82.98_{0.34}}$ & $66.56_{0.60}$& $38.54_{0.90}$ & $\mathbf{77.71_{0.80}}$& $\mathbf{75.17_{0.53}}$\\
    & & \topkdiv & $93.33_{0.36}$ & $96.43_{0.09}$ & $85.39_{0.40}$ & $73.76_{0.37}$ & $82.63_{0.57}$ & $66.48_{0.52}$& $\mathbf{78.36_{1.24}}$& $76.13_{0.42}$& $75.07_{0.49}$\\
    \midrule
    \parbox[t]{3mm}{\multirow{8}{*}{\rotatebox[origin=c]{90}{Llama-3.1-70B}}} & \multirow{4}{*}{4} 
    & \rand & $94.16_{0.33}$ & $96.77_{0.38}$ & $89.76_{0.16}$ & $75.48_{0.62}$ & $88.64_{0.48}$ & $77.14_{0.39}$& $17.50_{1.88}$& $\mathbf{81.47_{0.75}}$ & $75.51_{0.87}$\\
    & & \topk & $\mathbf{94.81_{0.34}}$ & $96.86_{0.11}$ & $\mathbf{90.57_{0.28}}$ & $\mathbf{76.22_{0.28}}$ & $88.87_{0.52}$ & $76.19_{0.57}$& $66.46_{1.23}$& $79.15_{0.22}$ & $75.67_{0.38}$\\
    & & \diversity & $94.34_{0.28}$ & $96.36_{0.23}$ & $90.14_{0.30}$ & $75.53_{0.33}$ & $\mathbf{89.27_{0.53}}$ & $\mathbf{77.21_{0.47}}$& $39.00_{1.87}$& $81.27_{1.25}$ & $\mathbf{77.75_{0.44}}$\\
    & & \topkdiv & $94.20_{0.18}$ & $\mathbf{96.88_{0.12}}$ & $90.46_{0.32}$ & $76.21_{0.49}$ & $88.67_{0.42}$ & $76.94_{0.45}$& $\mathbf{77.32_{0.95}}$& $79.26_{0.37}$ & $75.59_{0.33}$\\
    \cline{2-12}
    & \multirow{4}{*}{8} 
    & \rand & $94.66_{0.36}$ & $96.95_{0.28}$ & $89.84_{0.28}$ & $77.14_{0.40}$ & $89.47_{0.54}$ & $76.93_{0.77}$& $26.89_{1.90}$ & $82.62_{0.47}$ & $76.56_{0.78}$\\
    & & \topk & $94.18_{0.32}$ & $96.95_{0.18}$ & $90.24_{0.25}$ & $\mathbf{77.65_{0.30}}$ & $89.33_{0.29}$ & $76.29_{0.46}$& $75.68_{1.08}$ & $81.38_{0.51}$ & $76.70_{0.63}$\\
    & & \diversity & $\mathbf{94.95_{0.30}}$ & $96.47_{0.25}$ & $89.99_{0.17}$ & $77.33_{0.59}$ & $\mathbf{89.65_{0.27}}$ & $\mathbf{77.11_{0.29}}$& $44.25_{1.92}$ & $\mathbf{83.18_{1.43}}$ & $\mathbf{78.37_{0.60}}$\\
    & & \topkdiv & $94.75_{0.19}$ & $\mathbf{97.27_{0.11}}$ & $\mathbf{90.71_{0.25}}$ & $77.24_{0.31}$ & $89.17_{0.42}$ & $76.74_{0.92}$& $\mathbf{81.39_{1.16}}$& $81.47_{0.23}$ & $76.77_{0.41}$\\
    \bottomrule
    \end{tabular}
    \end{adjustbox}
    \end{table*}

\begin{table*}[htbp]
    \centering
    \small
    \caption{Performance of different algorithms for models belong to Gemma-family. Setting same as \Cref{tab:main-res} while adding results from more models (Gemma-2-2b and Gemma-2-27b). Our finding that diversity helps for more challenging tasks still holds.}
    \label{tab:Gemma-family}
    \begin{adjustbox}{width=\textwidth}
    \begin{tabular}{cccccccccccc}
    \toprule
    \multirow{2}{*}{model} & \multirow{2}{*}{$K$} & \multirow{2}{*}{Method}  & \multicolumn{2}{c}{Classification} & \multicolumn{2}{c}{Multi-choice}  & \multicolumn{2}{c}{Math} & Code & \multicolumn{2}{c}{Reading}\\
    & & & SST-2 & Amazon & Arc-easy & CsQA & GSM8K & GSM-Plus-Mini & GeoQuery & SQuAD & SCIQ \\
    \midrule
    \parbox[t]{3mm}{\multirow{8}{*}{\rotatebox[origin=c]{90}{Gemma-2-2B}}} & \multirow{4}{*}{4} 
    & \rand & $85.00_{0.94}$& $92.34_{0.75}$& $82.84_{0.45}$& $68.89_{0.61}$& $40.45_{1.11}$& $33.43_{0.69}$& $9.14_{1.38}$& $\mathbf{69.03_{0.34}}$& $71.27_{0.63}$\\
    & & \topk & $90.67_{0.55}$& $\mathbf{95.14_{0.18}}$& $\mathbf{84.57_{0.35}}$& $69.93_{0.47}$& $41.33_{0.85}$& $\mathbf{34.39_{0.71}}$& $53.39_{1.88}$& $68.19_{0.76}$& $71.09_{0.49}$\\
    & & \diversity & $89.63_{0.54}$& $92.55_{0.44}$& $84.21_{0.37}$& $\mathbf{71.03_{0.62}}$& $40.53_{2.15}$& $32.75_{1.64}$& $31.29_{1.47}$& $67.73_{1.18}$& $\mathbf{72.34_{0.45}}$\\
    & & \topkdiv & $\mathbf{91.66_{0.57}}$& $95.01_{0.22}$& $84.51_{0.32}$& $70.72_{0.52}$& $\mathbf{42.74_{0.57}}$& $\mathbf{34.39_{0.64}}$& $\mathbf{61.04_{1.55}}$& $67.85_{0.60}$& $71.64_{0.30}$\\
    \cline{2-12}
    & \multirow{4}{*}{8} 
    & \rand & $89.96_{0.51}$& $94.10_{0.47}$& $82.62_{0.33}$& $70.26_{0.34}$& $36.44_{0.82}$& $34.55_{0.46}$& $16.68_{2.01}$& $69.99_{0.68}$& $72.12_{0.29}$\\
    & & \topk & $92.22_{0.46}$& $\mathbf{95.58_{0.23}}$& $84.30_{0.19}$& $71.06_{0.44}$& $\mathbf{43.05_{0.69}}$& $36.45_{0.42}$& $61.00_{1.44}$& $69.15_{0.44}$& $\mathbf{72.35_{0.44}}$\\
    & & \diversity & $91.81_{0.48}$& $94.57_{0.37}$& $84.37_{0.38}$& $\mathbf{72.22_{0.62}}$& $38.87_{1.28}$& $34.07_{1.12}$& $35.29_{1.25}$& $69.31_{0.99}$& $72.28_{0.43}$\\
    & & \topkdiv & $\mathbf{92.40_{0.28}}$& $95.53_{0.20}$& $\mathbf{84.39_{0.24}}$& $71.99_{0.36}$& $42.93_{0.66}$& $\mathbf{36.47_{0.47}}$& $\mathbf{68.93_{1.35}}$& $\mathbf{70.09_{0.54}}$& $72.30_{0.37}$\\
    \midrule
    \parbox[t]{3mm}{\multirow{8}{*}{\rotatebox[origin=c]{90}{Gemma-2-9B}}} & \multirow{4}{*}{4} 
    & \rand      & $93.33_{0.52}$& $96.15_{0.23}$& $89.52_{0.25}$& $74.70_{0.70}$& $84.29_{0.43}$& $74.40_{0.47}$& $13.89_{1.67}$& $\mathbf{77.19_{0.89}}$ & $75.80_{0.54}$ \\
    & & \topk    & $\mathbf{94.47_{0.48}}$& $96.34_{0.20}$& $\mathbf{90.50_{0.16}}$& $75.19_{0.25}$& $84.25_{0.73}$& $\mathbf{74.50_{0.55}}$& $61.14_{1.33}$& $74.82_{0.70}$ & $75.24_{0.34}$ \\
    & & \diversity & $93.45_{0.46}$& $95.69_{0.23}$& $90.03_{0.24}$& $74.85_{0.39}$& $\mathbf{84.44_{0.91}}$& $73.34_{0.62}$& $36.29_{1.05}$& $77.06_{0.57}$ & $\mathbf{75.96_{0.55}}$ \\
    & & \topkdiv & $93.34_{0.34}$& $\mathbf{96.57_{0.16}}$& $90.19_{0.19}$& $\mathbf{75.60_{0.54}}$& $83.54_{0.56}$& $74.47_{0.63}$& $\mathbf{70.43_{1.24}}$& $75.05_{0.41}$ & $75.21_{0.29}$ \\
    \cline{2-12}
    & \multirow{4}{*}{8} 
    & \rand      & $93.30_{0.36}$& $96.09_{0.23}$& $89.39_{0.28}$& $75.98_{0.56}$& $\mathbf{84.34_{0.54}}$& $74.48_{0.63}$& $24.36_{1.19}$& $\mathbf{79.23_{0.64}}$ & $76.28_{0.50}$ \\
    & & \topk    & $\mathbf{94.20_{0.28}}$& $96.55_{0.16}$& $\mathbf{90.62_{0.16}}$& $76.14_{0.63}$& $83.57_{0.53}$& $\mathbf{75.36_{0.43}}$& $71.00_{1.20}$& $77.59_{0.42}$ & $75.55_{0.18}$ \\
    & & \diversity     & $93.41_{0.20}$& $95.94_{0.25}$& $89.90_{0.19}$& $\mathbf{76.60_{0.32}}$& $84.22_{0.52}$& $74.69_{0.64}$& $42.07_{1.10}$& $79.05_{0.93}$ & $\mathbf{76.65_{0.60}}$ \\
    & & \topkdiv & $94.04_{0.29}$& $\mathbf{96.58_{0.04}}$& $90.48_{0.22}$& $76.53_{0.21}$& $83.85_{0.66}$& $75.16_{0.32}$& $\mathbf{76.32_{0.85}}$& $77.64_{0.63}$ & $76.24_{0.48}$ \\
    \midrule
    \parbox[t]{3mm}{\multirow{8}{*}{\rotatebox[origin=c]{90}{Gemma-2-27B}}} & \multirow{4}{*}{4} 
    & \rand & $94.16_{0.40}$& $96.06_{0.26}$& $\mathbf{89.99_{0.39}}$& $76.15_{0.41}$& $90.16_{0.33}$& $\mathbf{70.76_{0.71}}$& $18.68_{1.83}$& $\mathbf{80.54_{0.59}}$& $75.61_{0.78}$\\
    & & \topk & $\mathbf{95.00_{0.33}}$& $96.47_{0.11}$& $89.64_{0.15}$& $76.47_{0.46}$& $89.73_{0.38}$& $69.27_{0.49}$& $67.75_{1.45}$& $78.43_{0.49}$& $76.35_{0.46}$\\
    & & \diversity & $94.15_{0.43}$& $95.57_{0.36}$& $89.78_{0.40}$& $\mathbf{76.97_{0.47}}$& $\mathbf{90.68_{0.23}}$& $69.85_{1.60}$& $41.75_{2.16}$& $79.91_{1.14}$& $\mathbf{76.73_{0.61}}$\\
    & & \topkdiv & $94.43_{0.18}$& $\mathbf{96.59_{0.12}}$& $89.76_{0.16}$& $76.15_{0.41}$& $89.53_{0.23}$& $69.62_{0.56}$& $\mathbf{79.11_{0.97}}$& $78.39_{0.46}$& $75.91_{0.53}$\\
    \cline{2-12}
    & \multirow{4}{*}{8} 
    & \rand & $94.59_{0.45}$& $96.42_{0.46}$& $89.62_{0.21}$& $77.17_{0.69}$& $90.23_{0.25}$& $69.04_{0.53}$& $30.36_{2.04}$& $\mathbf{81.81_{0.37}}$& $77.09_{0.53}$\\
    & & \topk & $94.30_{0.32}$& $\mathbf{96.60_{0.18}}$& $90.14_{0.24}$& $77.20_{0.42}$& $89.95_{0.27}$& $66.54_{0.37}$& $77.54_{1.00}$& $80.72_{0.38}$& $76.42_{0.52}$\\
    & & \diversity & $\mathbf{94.61_{0.29}}$& $95.95_{0.27}$& $90.21_{0.23}$& $\mathbf{78.47_{0.41}}$& $\mathbf{90.45_{0.22}}$& $\mathbf{70.02_{0.91}}$& $46.96_{2.23}$& $81.38_{0.89}$& $\mathbf{77.84_{0.51}}$\\
    & & \topkdiv & $94.56_{0.29}$& $96.43_{0.13}$& $\mathbf{90.34_{0.22}}$& $77.29_{0.31}$& $89.70_{0.25}$& $68.48_{0.38}$& $\mathbf{82.39_{1.35}}$& $80.90_{0.53}$& $77.10_{0.54}$\\
    \bottomrule
    \end{tabular}
    \end{adjustbox}
    \end{table*}

\begin{table*}[htbp]
\centering
\small
\caption{\textbf{CodeLlama-family results on GeoQuery dataset with different split.} We observe that on GeoQuery dataset, \topkdiv consistently works better than \topk, and there is also a large gap between \topk and more diversity-aware methods like \diversity and \rand, which aligns with the results in \Cref{tab:Llama-family}, \Cref{tab:Gemma-family}, and \Cref{tab:main-res} for Mistral-v0.3. The gap between different methods is wide and $\mathrm{std}$ is small, so we omit the $\mathrm{std}$.}
\label{tab:CodeLlama-family}
\begin{tabular}{ccccccc}
\toprule
\multirow{2}{*}{model} & \multirow{2}{*}{$K$} & \multirow{2}{*}{Method}  & \multicolumn{4}{c}{GeoQuery} \\
& & & Standard & Tmcd & Template & Length \\
\midrule
\parbox[t]{3mm}{\multirow{8}{*}{\rotatebox[origin=c]{90}{CodeLlama-7B-hf}}} & \multirow{4}{*}{4} 
& \rand & 12.21& 10.43& 9.75& 3.61\\
& & \topk & 57.86& 35.68& 36.90& 25.91\\
& & \diversity & 33.11& 21.95& 27.22& 13.16\\
& & \topkdiv & \textbf{67.86}& \textbf{40.00}& \textbf{50.34}& \textbf{33.64}\\
\cline{2-7}
& \multirow{4}{*}{8} 
& \rand & 21.11& 17.25& 17.93& 8.05\\
& & \topk & 58.21& 42.95& 48.06& 32.95\\
& & \diversity & 38.29& 24.75& 31.41& 16.48\\
& & \topkdiv & \textbf{66.79}& \textbf{46.36}& \textbf{55.13}& \textbf{39.09}\\
\midrule
\parbox[t]{3mm}{\multirow{8}{*}{\rotatebox[origin=c]{90}{CodeLlama-13B-hf}}} & \multirow{4}{*}{4} 
& \rand & 13.82& 11.66& 11.73& 4.11\\
& & \topk & 63.57& 37.73& 38.04& 29.77\\
& & \diversity & 37.43& 23.23&  26.51& 18.07\\
& & \topkdiv & \textbf{72.14}& \textbf{44.32}& \textbf{53.99}& \textbf{40.68}\\
\cline{2-7}
& \multirow{4}{*}{8} 
& \rand & 24.89& 18.64& 21.16& 9.52\\
& & \topk & 69.64& 44.09& 56.04& 41.14\\
& & \diversity & 42.71& 26.00& 30.59& 20.68\\
& & \topkdiv & \textbf{79.29}& \textbf{47.73}& \textbf{64.24}& \textbf{44.32}\\
\midrule
\parbox[t]{3mm}{\multirow{8}{*}{\rotatebox[origin=c]{90}{CodeLlama-34B-hf}}} & \multirow{4}{*}{4} 
& \rand & 15.75& 13.02& 14.42& 5.98\\
& & \topk & 63.57& 42.05& 43.51& 30.23\\
& & \diversity & 39.86& 24.75& 32.92& 19.18\\
& & \topkdiv & \textbf{72.50}& \textbf{48.18}& \textbf{56.72}& \textbf{44.55}\\
\cline{2-7}
& \multirow{4}{*}{8} 
& \rand & 25.46& 20.50& 24.76& 11.73\\
& & \topk & 73.93& 48.41& 56.04& 44.09\\
& & \diversity & 44.18& 27.50& 39.29& 24.32\\
& & \topkdiv & \textbf{80.71}& \textbf{50.00}& \textbf{64.92}& \textbf{48.86}\\
\bottomrule
\end{tabular}
\end{table*}


We evaluated different model sizes from the Gemma and Llama families, including Llama-3.2-1B, Gemma-2-2B, Llama-3.2-3B, Llama-3.1-8B, Gemma-2-9B, Gemma-2-27B, and Llama-3.1-70B.
For math tasks, we used the instruct version of the corresponding models. For other tasks, we used the base models.
For code tasks, we also tested domain-specific CodeLlama models, including CodeLlama-7B-hf, CodeLlama-13B-hf, and CodeLlama-34B-hf. The results on CodeLlama were consistent with those of other base models.

We report the complete experimental results of the Llama family in Table \ref{tab:Llama-family}, the Gemma family results in Table \ref{tab:Gemma-family}, and the CodeLlama results in Table \ref{tab:CodeLlama-family}.
The methods that performed well on the corresponding tasks in Table \ref{tab:main-res} also demonstrated good performance across different model sizes.

\begin{table*}[!t]
    \centering
    \small
    \caption{Results of GPT-4o-mini and Deepseek-v3 in Math Task.}
    \label{tab:math of 4o-mini and v3}
    \begin{tabular}{ccccccc}
    \toprule
    \multirow{2}{*}{\centering Model} & \multirow{2}{*}{\centering $K$} & \multirow{2}{*}{Dataset} & \multicolumn{4}{c}{Method} \\
    & & & Rand & TopK & Div & TopK-Div \\
    \midrule
    \multirow{2}{*}{\centering GPT-4o-mini} 
    & \multirow{2}{*}{\centering 4} 
    & GSM8K & \textbf{93.03} & 91.06 & 92.80 & 92.27 \\
    & & PRM800K & 68.40 & 66.60 & \textbf{71.20} & 69.20 \\
    \midrule
    \multirow{2}{*}{\centering Deepseek-v3} 
    & \multirow{2}{*}{\centering 4} 
    & GSM8K & \textbf{96.13} & 95.75 & 95.91 & 95.45 \\
    & & PRM800K & 85.00 & \textbf{87.00} & 85.00 & 86.80 \\
    \bottomrule
    \end{tabular}
    \end{table*}

Due to resource constraints, our experiments primarily focused on mainstream open-source models. We tested the Math task on the commercial-grade models gpt-4o-mini and deepseek-v3. As shown in \Cref{tab:math of 4o-mini and v3}, the Div method consistently outperformed TopK on both gsm8k and prm800k.

\subsection{Changing the number of shots}\label{sec:ablation-shots}
In this section, we investigate how the performance advantage of diversity-aware methods over TopK evolves with increasing shot count. Our results in \Cref{fig:differ-k} show that the improvement from diversity-aware selection (Div) remains substantial even with higher number of shots.

We believe that as the shot number increases, there is an increase in redundant information among the examples selected by the \topk method. In contrast, the \topkdiv method minimizes the occurrence of redundant information as much as possible, thereby enabling the model to more clearly identify the task theme.

\begin{figure}
    \centering
    \includegraphics[width=\linewidth]{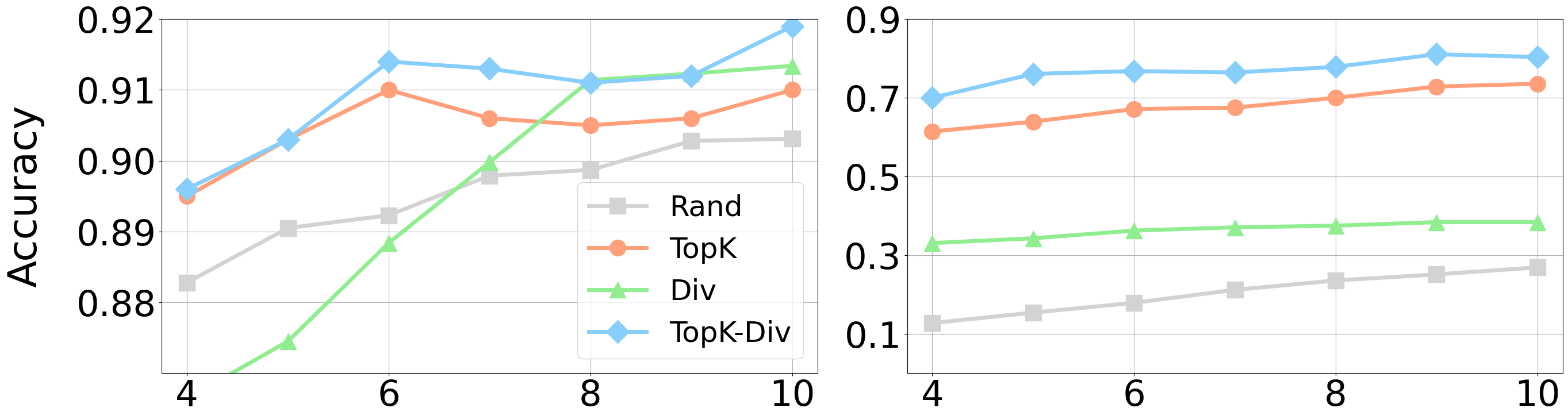}
    \caption{The performance of different demonstration selection methods with different number of shots $K$. \textbf{Left:} sentiment classification task with demonstrations come from Amazon and queries come from SST-2. \textbf{Right:} text to SQL task with demonstrations and query come from the training and test set of GeoQuery Standard Split.}
    \label{fig:differ-k}
\end{figure}

\Cref{fig:differ-k} presents the relative improvement on the GeoQuery standard split and SCIQ --- two tasks where diversity-aware methods showed clear benefits (\Cref{sec:main-findings}) --- across different sizes of the Llama-3.1/3.2 and Gemma-2 model families. The results indicate that, in general, the relative improvement from diversity-aware selection does not diminish significantly as model size increases. This underscores the continued importance of understanding diversity’s role in demonstration selection. 

\begin{figure}[!t]
\centering
    \includegraphics[width=0.7\linewidth]{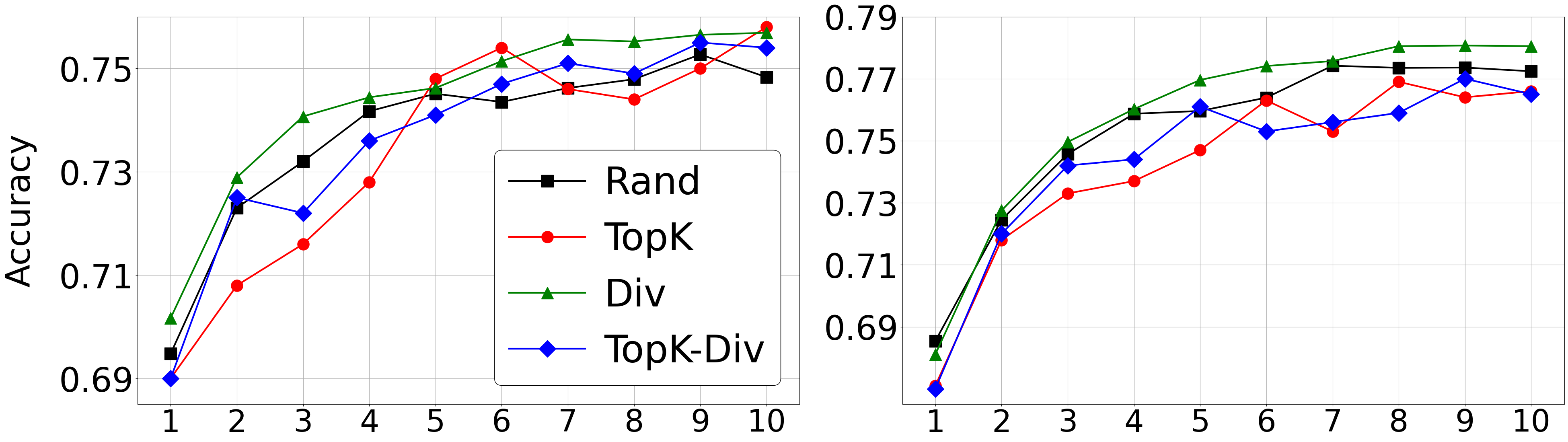}
    \caption{\textbf{(The accuracy of different methods with different number of shots $K$ on reading comprehension tasks.)} We choose report the results on Llama-3.1-8B, with Sentence-BERT embeddings (all-roberta-large-v1). \textbf{Left:} results where demonstration and query come from SCIQ. \textbf{Right:} results where demonstration and query come from SQuAD.}
    \label{fig:differ-k-sciq}
\end{figure}

We present the experimental results on reading comprehension task (SCIQ, SQuAD), where diversity-aware methods perform well, with different numbers of shots $K$ ranging from $1$ to $10$.  We test different methods on the Llama-3.1-8B model. \Cref{fig:differ-k-sciq} summarizes. To our surprise, when $k=1$, \topk performs significantly worse than \rand on both datasets, indicating that the accuracy of these datasets is not solely related to the coverage of example sets. Under most settings of $k$, \diversity shows significant advantages over \topk. Moreover, the correlation between example sets selected by \diversity and test samples is relatively low. This sufficiently demonstrates that even when example samples do not have coverage of test samples, they can still be high-quality examples, which is also consistent with the good performance of \rand.

\subsection{Results on subset size of Div} \label{sec:ablation-div-subset-size}


\begin{table*}[!t]
    \centering
    \small
    \label{tab:div_subset_size}
    \caption{Results of Different Subset Sizes for Div Method in Reading and Classification Tasks. For the Div method, we set $\text{subset size} = 100$ across all tasks. Additionally from Table 1 in our paper, we report new results for the Div method with $\text{subset size} = 8, 20, 50, 200, 500$. Embedding = all-roberta-large-v1.}
    \begin{adjustbox}{width=\textwidth}
    \begin{tabular}{ccccccccccc}
    \toprule
    \multirow{2}{*}{Model} & \multirow{2}{*}{Task} & \multirow{2}{*}{Dataset} & \multirow{2}{*}{$K$} & \multicolumn{7}{c}{Method} \\
    & & & & TopK & Div & Div-8& Div-20 &Div-50 & Div-200 & Div-500 \\
    \midrule
    \parbox[t]{4mm}{\multirow{8}{*}{\rotatebox[origin=c]{90}{Llama-3.1-8B}}} 
    & \multirow{4}{*}{Reading}
    & \multirow{2}{*}{SQuAD} 
    & 4 
    & $73.51$ & $75.66$ & $75.04$ & $75.42$ & $75.85$ & $\mathbf{75.93}$ & $73.74$\\
    & & & 8 
    & $75.52$ & $77.71$ & $77.00$ & $77.37$ & $\mathbf{77.89}$ & $77.19$ & $76.05$ \\
    \cmidrule{3-11}
    & & \multirow{2}{*}{SCIQ} 
    & 4 
    & $72.70$ & $74.47$ & $74.30$ & $73.77$ & $74.51$ & $\mathbf{74.67}$ & $74.06$\\
    & & & 8 
    & $74.72$ & $75.17$ & $75.55$ & $74.75$ & $75.11$ & $\mathbf{75.67}$ & $74.85$\\
    \cmidrule{2-11}
    & \multirow{4}{*}{Classification}
    & \multirow{2}{*}{SST-2} 
    & 4 
    & $\mathbf{94.13}$ & $91.50$ & $92.17$ & $83.99$ & $91.23$ & $92.28$ & $92.63$ \\
    & & & 8 
    & $\mathbf{93.64}$ & $92.95$ & $93.23$ & $90.98$ & $91.99$ & $92.95$ & $92.98$ \\
    \cmidrule{3-11}
    & & \multirow{2}{*}{Amazon} 
    & 4 
    & $96.24$ & $96.18$ & $\mathbf{96.61}$ & $95.94$ & $96.50$ & $96.50$ & $96.06$ \\
    & & & 8 
    & $96.12$ & $96.25$ & $\mathbf{96.95}$ & $96.54$ & $96.49$ & $96.38$ & $96.47$ \\
    \midrule
    \parbox[t]{4mm}{\multirow{8}{*}{\rotatebox[origin=c]{90}{Gemma-2-9B}}} 
    & \multirow{4}{*}{Reading}
    & \multirow{2}{*}{SQuAD} 
    & 4 & $74.82$ & $77.06$ & $76.91$ & $77.20$ & $\mathbf{77.40}$ & $77.15$ & $76.21$ \\
    & & & 8 & $77.59$ & $79.05$ & $79.07$ & $79.13$ & $\mathbf{79.44}$ & $78.86$ & $77.64$ \\
    \cmidrule{3-11}
    & & \multirow{2}{*}{SCIQ} 
    & 4 & $75.24$ & $75.96$ & $76.06$ & $76.23$ & $76.37$ & $\mathbf{76.47}$ & $76.20$ \\
    & & & 8 
    & $75.55$ & $76.65$ & $76.74$ & $76.81$ & $\mathbf{77.10}$ & $76.96$ & $76.21$ \\
    \cmidrule{2-11}
    & \multirow{4}{*}{Classification}
    & \multirow{2}{*}{SST-2} 
    & 4 & $\mathbf{94.47}$ & $93.45$ & $93.17$ & $91.42$ & $92.86$ & $93.68$ & $93.79$ \\
    & & & 8 & $\mathbf{94.20}$ & $93.41$ & $93.20$ & $92.31$ & $92.94$ & $93.76$ & $93.16$ \\
    \cmidrule{3-11}
    & & \multirow{2}{*}{Amazon} 
    & 4 & $96.34$ & $95.69$ & $\mathbf{96.39}$ & $96.23$ & $96.38$ & $96.15$ & $95.94$ \\
    & & & 8 
    & $\mathbf{96.55}$ & $95.94$ & $96.45$ & $96.36$ & $96.30$ & $96.10$ & $96.44$ \\
    \midrule
    \parbox[t]{4mm}{\multirow{8}{*}{\rotatebox[origin=c]{90}{Mistral-7B-v0.3}}} 
    & \multirow{4}{*}{Reading}
    & \multirow{2}{*}{SQuAD} 
    & 4 
    & $75.04$ & $75.96$ & $75.56$ & $\mathbf{76.29}$ & $76.22$ & $75.58$ & $74.13$ \\
    & & & 8 
    & $77.05$ & $77.67$ & $76.89$ & $\mathbf{77.89}$ & $77.68$ & $77.35$ & $75.38$ \\
    \cmidrule{3-11}
    & & \multirow{2}{*}{SCIQ} 
    & 4 & $73.73$ & $\mathbf{75.83}$ & $75.17$ & $75.05$ & $75.22$ & $75.34$ & $74.32$ \\
    & & & 8 
    & $75.44$ & $76.30$ & $76.10$ & $76.05$ & $75.74$ & $\mathbf{76.62}$ & $75.51$ \\
    \cmidrule{2-11}
    & \multirow{4}{*}{Classification}
    & \multirow{2}{*}{SST-2} 
    & 4 & $\mathbf{93.57}$ & $91.98$ & $91.28$ & $89.27$ & $91.24$ & $92.22$ & $92.50$ \\
    & & & 8 
    & $\mathbf{93.61}$ & $92.55$ & $92.77$ & $91.32$ & $91.57$ & $92.93$ & $92.88$ \\
    \cmidrule{3-11}
    & & \multirow{2}{*}{Amazon} 
    & 4 & $\mathbf{96.17}$ & $94.15$ & $94.59$ & $94.19$ & $94.97$ & $94.27$ & $94.83$ \\
    & & & 8 
    & $\mathbf{96.15}$ & $95.10$ & $96.00$ & $95.63$ & $95.61$ & $95.36$ & $95.90$ \\
    \bottomrule
    \end{tabular}
    \end{adjustbox}
\end{table*}

Here, we investigated the effect of different subset sizes on the Div method for both Classification and Reading tasks. The detailed results are presented in \Cref{tab:div_subset_size}. The optimal subset size is not consistent across different datasets, but this variation does not affect the original comparison results between the Div and TopK methods on these datasets. Therefore, to ensure a fair comparison across different datasets, we adopted a compromise in our main experiments by setting the subset size for the Div method to 100.

\subsection{More diversity-aware method}
\label{sec:ablation-k_means}
In the main text, \topkdiv and \diversity are both diversity-aware methods that combine the \topk method. We want to understand what happens when using a purely diversity-based method. Therefore, we implemented the \kmeans method:  dividing the training set into k clusters by k-means algorithm and then choose the nearest sample to the Centroid from each cluster (\kmeans), \kmeans can be viewed as a purely diversity-based met.

The results in \Cref{tab:k-means in Math task} show that the \kmeans method still has advantages compared to the \topk method. In fact, we believe \rand can also be considered a purely diversity-based method. This implies the advantage of diversity methods does not depend on the specific implementation.

\begin{table*}[!t]
    \centering
    \small
    \caption{Results of K-Means Method in Math Task. We add the K-Means baseline based on Table 1 in our paper. The implementation of K-Means consists of two steps: First, partition the input into $k$ clusters using the $k$-Means method. Second, select $k$ points as demonstrations by choosing the point closest to the cluster center within each cluster.}
    \label{tab:k-means in Math task}
    \begin{tabular}{cccccccc}
    \toprule
    \multirow{2}{*}{model} & \multirow{2}{*}{$K$} & \multirow{2}{*}{Dataset} & \multicolumn{5}{c}{Method}\\
    & & & Rand & Topk & Div & Topk-Div & K-means \\
    \midrule
    \parbox[t]{2mm}{\multirow{4}{*}{\rotatebox[origin=c]{90}{\tiny Llama-3.1-8B}}} 
    & \multirow{2}{*}{4} 
    & GSM8K & $82.24$ & $81.99$ & $82.14$ & $81.74$ & $\mathbf{83.89}$ \\
    & & GSM-Plus-M & $66.90$ & $65.30$ & $66.92$ & $66.12$ & $\mathbf{68.10}$ \\
    \cline{2-8}
    & \multirow{2}{*}{8} 
    & GSM8K & $82.81$ & $82.26$ & $\mathbf{82.98}$ & $82.63$ & $82.36$ \\
    & & GSM-Plus-M & $66.72$ & $65.99$ & $66.56$ & $66.48$ & $66.52$ \\
    \midrule
    \parbox[t]{2mm}{\multirow{4}{*}{\rotatebox[origin=c]{90}{\tiny Gemma-2-9B}}} 
    & \multirow{2}{*}{4} 
    & GSM8K & $84.29$ & $84.25$ & $84.44$ & $83.54$ & $\mathbf{85.24}$ \\
    & & GSM-Plus-M & $74.40$ & $74.50$ & $73.34$ & $74.47$ & $74.52$ \\
    \cline{2-8}
    & \multirow{2}{*}{8} 
    & GSM8K & $84.34$ & $83.57$ & $84.22$ & $83.85$ & $\mathbf{84.97}$ \\
    & & GSM-Plus-M & $74.48$ & $75.36$ & $74.69$ & $75.16$ & $\mathbf{76.29}$ \\
    \midrule
    \parbox[t]{2mm}{\multirow{4}{*}{\rotatebox[origin=c]{90}{\tiny Mistral-7B-v0.3}}} 
    & \multirow{2}{*}{4} 
    & GSM8K & $48.78$ & $49.28$ & $49.49$ & $\mathbf{49.99}$ & $43.90$ \\
    & & GSM-Plus-M & $37.20$ & $38.20$ & $37.50$ & $\mathbf{38.45}$ & $35.50$ \\
    \cline{2-8}
    & \multirow{2}{*}{8} 
    & GSM8K & $47.86$ & $48.43$ & $48.33$ & $48.60$ & $\mathbf{49.13}$ \\
    & & GSM-Plus-M & $36.32$ & $37.35$ & $36.12$ & $\mathbf{37.81}$ & $36.41$ \\
    \bottomrule
    \end{tabular}
\end{table*}

\subsection{Abalations on the size of training set} \label{sec:ablation-train-set-size}

To investigate whether the way diversity works is related to the size of the training set—for example, whether the example selection strategy needs to change when the available training set is limited, We conducted experiments on the SQuAD and SCIQ datasets by randomly sampling 50 examples from each training set to create SCIQ-50 and SQuAD-50, while keeping the original testing set unchanged. 

When the available training set size is reduced, TopK still underperforms compared to Div, maintaining an average performance gap of 1\% in 4-shot and 8-shot settings.

\subsection{Ablations on ``better'' embeddings}\label{sec:ablations-embs}

\begin{table}[!t]
\centering
\small
\caption{Embedding on answer using Gemma-2-9B with 4 shots. Comparing to \Cref{tab:main-res}, the relative ranking between the tested methods doesn't change.}
\label{tab:qwa}
\begin{tabular}{ccccc}
\toprule
 & \rand & \topk & \diversity & \topkdiv \\
\midrule
GSM8K & 82.21& 84.53& 84.14& \textbf{84.69}\\
PRM800K & 38.04& 45.60& 37.56& \textbf{46.40}\\
\midrule
GeoQuery(Standard) & 13.71& 79.64& 54.32& \textbf{84.29}\\
\bottomrule
\end{tabular}
\end{table}

\paragraph{``better'' embedding in a cheating way.} All methods we test, except randomly chosen (\rand), depend on an embedding model. It is always possible that the embedding model is not good enough. Indeed, using Sentence-BERT on questions/input (optimized for semantic similarity) might not be optimal for math tasks and text-to-SQL generation, and the ideal embedding might be highly dependent on the structure or reasoning steps of the answer. In this section, we test if diversity still helps when given a better embedding, computed in a ``cheating'' way: For math problems, we append the gold answer after the question and compute the embedding using Sentence-BERT; For text-to-SQL generation, we compute the occurrence of keywords in the answer~\citep{levy2023diverse}. \Cref{tab:qwa} summarizes the result using the ``cheating'' embeddings on Gemma-2-9B, and in general, diversity still helps for these tasks.


\begin{table*}[!t]
\centering
\small
\caption{\textbf{Results of different embeddings on Llama-3.1-8B.} We test different methods using different similarity scores computation (``all-roberta-large-v1'', ``BM25'', ``BertScore''). We test on Llama-3.1-8B model on math (using instruct model) and reading comprehension (using base model). The numbers for embedding ``all-roberta-large-v1'' are copied from \Cref{tab:main-res}. The numbers corresponding to \rand for BM25 and BertScore are also copied. We find that: (1) using another embedding might affect the \topk performance, as we can observe an increase of performance for \topk while changing to BM25 or BertScore. (2) Diversity still helps, since if we look at the best performance with the best embedding, in most of the cases the best performance is still achieved by diversity-aware methods.}
\label{tab:bm25-bert}
\begin{tabular}{ccccccc}
\toprule
\multirow{2}{*}{Embedding} & \multirow{2}{*}{$K$} & \multirow{2}{*}{Method}  & \multicolumn{2}{c}{Math} & \multicolumn{2}{c}{Reading}\\
& & & GSM8K & PRM800K & SQuAD & SCIQ \\
\midrule
\parbox[t]{4mm}{\multirow{8}{*}{\rotatebox[origin=c]{90}{all-roberta-large-v1}}} & \multirow{4}{*}{4} 
& \rand & 82.40& 43.50&  75.87& 74.17\\
& & \topk & \textbf{82.64}& 41.40& 73.70& 72.80\\
& & \diversity & 82.43& \textbf{44.86}& \textbf{76.02}& \textbf{74.44}\\
& & \topkdiv &81.43& 44.80 & 74.40& 73.60\\
\cline{2-7}
& \multirow{4}{*}{8} & \rand & 82.77& 43.32& 77.35& 74.79\\
& & \topk & 82.11& 43.00& 76.90& 74.40\\
& & \diversity & \textbf{83.13}& \textbf{44.28}& \textbf{78.05}& \textbf{75.52}\\
& & \topkdiv & 81.73& 40.00& 75.90& 74.90\\ 
\midrule
\parbox[t]{4mm}{\multirow{8}{*}{\rotatebox[origin=c]{90}{BM25}}} & \multirow{4}{*}{4} 
& \rand & 82.40& 43.50&  75.87& 74.17\\
& & \topk & 81.88 & 42.00 & 73.80 & \textbf{74.40} \\
& & \diversity & \textbf{82.47} & 44.12 & \textbf{76.65} & 72.74 \\
& & \topkdiv & 81.20 & \textbf{45.00} & 75.50 & 74.30 \\
\cline{2-7}
& \multirow{4}{*}{8} 
& \rand & 82.77& 43.32& 77.35& 74.79\\
& & \topk & 82.94 & 44.80 & 76.60 & \textbf{75.20} \\
& & \diversity & \textbf{83.44} & 43.92 & \textbf{78.97} & 74.12 \\
& & \topkdiv & 83.02 & \textbf{45.60} & 77.50 & 74.50 \\
\midrule
\parbox[t]{4mm}{\multirow{8}{*}{\rotatebox[origin=c]{90}{BertScore}}} & \multirow{4}{*}{4} 
& \rand & 82.40& 43.50&  \textbf{75.87}& 74.17\\
& & \topk & 81.58& \textbf{45.60}& 75.00& \textbf{74.30}\\
& & \diversity & \textbf{82.81}& 44.06& 74.16& 73.06\\
& & \topkdiv & 81.05& 44.40& 74.90& 73.20\\
\cline{2-7}
& \multirow{4}{*}{8} 
& \rand & 82.77& 43.32& \textbf{77.35}& 74.79\\
& & \topk & 82.34& 42.60& 76.40& \textbf{75.50}\\
& & \diversity & \textbf{83.09}& 43.68& 76.00& 74.95\\
& & \topkdiv & 81.58& \textbf{44.00}& 75.70& 74.70\\
\bottomrule
\end{tabular}
\end{table*}

\paragraph{Computing local structure for GeoQuery.} For the code-standard task, we tokenized the sample answers at the word level and obtained 52 distinct tokens, with each dimension representing a token.
For a given sample, in its 52-dimensional vector, if the corresponding token appears in its answer, the value at that position is 1, otherwise 0.
We use this embedding as the code embedding on answers.


\paragraph{BM25 and BertScore for math and reading comprehension.} We conduct ablation studies on the model to compute the similarity score, changing from cosine similarity from embeddings computed by ``all-roberta-large-v1'' to BM25 and BertScore, and test different methods on math and reading comprehension tasks. \Cref{tab:bm25-bert} summarizes our results.
We find that (1) using another embedding might affect the \topk performance, as we can observe an increase in performance for \topk while changing to BM25 or BertScore. (2) Diversity still helps since if we look at the best performance with the best embedding, in most cases, the best performance is still achieved by diversity-aware methods.




\subsection{Decoding method} \label{sec:ablation-decode-method}
\begin{table}[!h]
\centering
\small
\caption{Decode performance using Llama-3.1-8B on reading comprehension tasks. The number of shot is fixed as 4.}
\label{tab:decoding}

\begin{tabular}{cccccc}
\toprule
Decode & Test. & \rand & \topk & \diversity & \topkdiv \\
\midrule
\multirow{3}{*}{Greedy} & Squad & 75.87& 73.70& \textbf{76.02}& 74.40  \\
 & \hlcella Sciq & \hlcella 74.17& \hlcella 72.80& \hlcella \textbf{74.44}& \hlcella 73.60\\
\midrule
\multirow{2}{*}{Sampling} & Squad & 70.93& 72.40& 70.95& \textbf{72.80}\\
 & \hlcella Sciq &\hlcella 66.86&\hlcella 66.70&\hlcella 67.08&\hlcella \textbf{67.70}\\
\bottomrule
\end{tabular}
\end{table}

In this part we show some preliminary results on changing the decoding strategy for reading comprehension tasks (SQuAD and CommonsenseQA), since for code and math, greedy decoding is known to perform well. By changing greedy decoding to sampling decoding ($\text{topP} = 0.95, \text{Temperature}=0.7$), we find that the performance of all tasks drops a lot (\Cref{tab:decoding}), which justifies our decoding strategy selection.






\section{Omitted proofs of results in Section \ref{sec:theory}}
\label{sec:proof}

\subsection{Proof of Theorem \ref{thm:example_1}: justification example \rom{1}}
\label{sec:proof_1}

Fix a query $e_q$ drawn from $\gQ_\gE$.
By symmetry, we can assume the non-zero entry set of $e_q$ is $[2l]$.
For simplicity, we let $\theta = \theta_\gT$.

\paragraph{Demonstration example set for \topkdiv}
We first analyze the demonstration example set for \topkdiv, denoted by $S = \left\{s^{(1)}, s^{(2)}\right\}\subseteq D$.
Let $T^{(t)}$ denote the non-zero entry set of $s^{(t)}$.
By the construction of $\gD$, we first note that $|T^{(1)}\cap [l]| = \frac{l}{2}$.
By the rule of \topkdiv, we also note that $|T^{(2)}\cap [l]| = \frac{l}{2}$ and $T^{(1)}\cap T^{(2)} = \emptyset$.
Such $s^{(2)}$ must exist since all elements in the ground set of $\gD_\gE$ are contained in $D$, and is selected since it minimizes 
\[\alpha\cdot\texttt{Similarity}(e,e_q) + (1-\alpha)\texttt{Diversity}(e,S)\]
over all $e\in D-\left\{s^{(1)}\right\}$.

\paragraph{Demonstration example set for \topk}
Next, we compute the expected prediction loss $L$ for \topk.
Again, let its demonstration example set be $S = \left\{s^{(1)}, s^{(2)}\right\}\subseteq D$.
Let $T^{(t)}$ denote the non-zero entry set of $s^{(t)}$.
By the construction of $\gD$, we note that $|T^{(1)}\cap [l]| = |T^{(2)}\cap [l]| = \frac{l}{2}$.
However, different from the case of \topkdiv, $|T^{(1)}\cap T^{(2)}|$ can vary from $0$ to $l-1$.
To handle this, we define $a = |T^{(1)}\cap T^{(2)} \cap [l]|$ and $b = |T^{(1)}\cap T^{(2)} \cap ([d] \setminus [l])|$, and define $L_{a,b}$ to be the expected prediction loss conditioned on pair $(a,b)$.
Note that $0\leq a,b\leq l/2$ and $a+b\leq l - 1$.

\paragraph{Comparing $L$ and $L'$}
We remark that $L$ is a linear combination $\sum_{a,b} p_{a,b} L_{a,b}$ with $\sum_{a,b} p_{a,b} = 1$, where $p_{a,b}$ is the conditional probability with respect to intersection numbers $(a,b)$.
Also, $L' = L_{0,0}$.
By symmetry, we have the following observation:
\[
\Pr[a \leq l/4\leq b] \geq 0.25,
\]
where $l/4$ is the expectation of $a$ and $b$.
Thus, we have
\[
L \geq \sum_{a \leq l/4\leq b} p_{a,b} L_{a,b} \geq \sum_{a,b\in l/4\pm \sqrt{l}} p_{a,b} \cdot \min_{a \leq l/4\leq b} L_{a,b} \geq 0.25 \min_{a \leq l/4\leq b} L_{a,b}.
\]
Thus, to prove $L > L'$, it suffices to prove the following lemma.

\begin{lemma}[\bf{Comparing $L_{a,b}$ and $ L_{0,0}$}]
\label{lm:comparison_1}
For any $a \leq l/4\leq b$, we have $L_{a,b} > 4L_{0,0}$.
\end{lemma}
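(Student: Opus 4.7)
The plan is to derive a closed-form expression for $L_{a,b}$ that depends only on $s := a+b$, and then exploit its monotonicity in $s$ to reduce the claim to a one-line numerical check.

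First, since each coordinate $\theta_i$ is $\mathrm{Uniform}[0,1]$ i.i.d., for any fixed vector $v\in\R^d$ one has
\[
\E_\theta\langle v,\theta\rangle^2 \;=\; \tfrac{1}{12}\|v\|^2 + \tfrac14(\mathbf 1^\top v)^2.
\]
Applying this with $v = e_q - P_E e_q$, where $P_E$ is the orthogonal projection onto $\mathrm{span}(s^{(1)}, s^{(2)})$, reduces the task to computing the two scalars $\|v\|^2$ and $\mathbf 1^\top v$ as functions of $(a,b)$.

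Second, I would compute $P_E e_q$ via the $2\times 2$ Gram matrix. Conditioned on $(a,b)$, the \topk rule forces $T^{(t)}\cap[2l]\subseteq[l]$ with $|T^{(t)}\cap[l]|=l/2$, so $\|s^{(t)}\|^2=l$, $\langle s^{(1)},s^{(2)}\rangle=a+b$, and $\langle e_q,s^{(t)}\rangle=l/2$. Inverting the Gram matrix yields $P_E e_q = \gamma(s^{(1)}+s^{(2)})$ with $\gamma := l/(2(l+s))$. Expanding inner products gives $\|v\|^2 = l - 2l\gamma + 2(l+s)\gamma^2$ and $\mathbf 1^\top v = l(1-2\gamma)$; substituting the value of $\gamma$ simplifies to
\[
L_{a,b} \;=\; \frac{l^2 s^2}{4(l+s)^2} + \frac{l(l+2s)}{24(l+s)}.
\]
In particular $L_{0,0} = l/24$, so the target inequality becomes $L_{a,b} > l/6$.

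Finally, both terms of $L_{a,b}$ are visibly increasing in $s$, so under the hypothesis $a \leq l/4 \leq b$ we have $s \geq l/4$ and hence $s/(l+s) \geq 1/5$. The first term alone is then at least $l^2/100$, which for $l\geq 200$ is at least $2l$ and comfortably exceeds $l/6 = 4L_{0,0}$. The main obstacle is the middle step: verifying that $\|v\|^2$ and $\mathbf 1^\top v$ collapse into functions of the single parameter $s = a+b$ rather than depending separately on $a$ and $b$. Once one observes that the Gram-matrix input only sees the scalars $\|s^{(t)}\|^2 = l$ and $\langle s^{(1)}, s^{(2)}\rangle = a+b$, this univariate dependence is automatic and the remaining algebra is routine.
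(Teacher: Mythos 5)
Your proof is correct, and it takes a genuinely different (and cleaner) route than the paper's. The paper computes the min-norm interpolant coordinate-by-coordinate for one representative configuration via Lagrange multipliers and then expands the expectation term by term; you instead note that the loss is $\langle \theta, v\rangle^2$ with $v=(I-P_E)e_q$, that $\E_\theta\langle\theta,v\rangle^2=\tfrac{1}{12}\|v\|^2+\tfrac14(\mathbf{1}^\top v)^2$ for i.i.d.\ $\mathrm{Uniform}[0,1]$ coordinates, and that $\|v\|^2$ and $\mathbf{1}^\top v$ are determined by the Gram data $\|s^{(t)}\|^2=l$, $\langle s^{(1)},s^{(2)}\rangle=a+b$, $\langle e_q,s^{(t)}\rangle=l/2$, $\mathbf{1}^\top s^{(t)}=l$. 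This buys two things: (i) the reduction to a single representative configuration becomes rigorous rather than a symmetry appeal, since every configuration with the same $(a,b)$ provably yields the same loss; and (ii) it exposes that $L_{a,b}$ depends only on $s=a+b$. Your closed form $L_{a,b}=\frac{l^2s^2}{4(l+s)^2}+\frac{l(l+2s)}{24(l+s)}$ is the one consistent with direct computation of the min-norm interpolant (e.g., at $(l,a,b)=(4,1,0)$ it gives $9/25$, which matches an explicit solve), whereas the paper's displayed formula is not a function of $a+b$ alone and gives $19/150$ there; the discrepancy traces to the paper's penultimate display omitting the contribution $-\sum_{i=l-a+1}^{l}\theta_i$ of the $a$ query coordinates left uncovered by $T^{(1)}\cup T^{(2)}$. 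The lemma's conclusion is unaffected, and your closing estimate ($b\ge l/4$ gives $s/(l+s)\ge 1/5$, so the first term is at least $l^2/100\ge 2l> l/6=4L_{0,0}$ for $l\ge 200$) is airtight.
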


\begin{proof}
By symmetry, we assume $T^{(1)} = [\frac{l}{2}] \cup ([\frac{5}{2}l]- [2l])$, 
$T^{(2)} = ([l] - [a] - [\frac{l}{2}-a]) \cup ([3l-b] - [\frac{5}{2}l-b])$, 
$|T^{(1)} \cap T^{(2)} \cap [L]| = |T^{(1)} \cap T^{(2)} \cap [2L]| = a$, 
$|T^{(1)} \cap T^{(2)} \cap ([4L]-[2L])| = b$. 
The expected prediction loss for this setting equals $L_{a,b}$ since $\theta_i$s are i.i.d. random variables.
Let $\widehat{\theta}$ denote the min-norm solution defined as in Assumption \ref{ass:icl-lr}. 
Then we have

\begin{align} \label{eq:ex1_1}
\langle \widehat{\theta} - \theta, e_{T^{(1)}} \rangle = \sum\limits_{i=1}^{\frac{l}{2}}{\widehat{\theta}_i} + \sum\limits_{i=2l+1}^{\frac{5}{2}l} \widehat{\theta}_i -\sum\limits_{i=1}^{\frac{l}{2}}{\theta}_i - \sum\limits_{i=2l+1}^{\frac{5}{2}l} \theta_i = 0, 
\end{align}
and
\begin{align} \label{eq:ex1_2}
\langle \widehat{\theta} - \theta, e_{T^{(2)}} \rangle =\sum\limits_{i=\frac{l}{2}-a+1}^{l-a}{\widehat{\theta}_i} + \sum\limits_{i=\frac{5}{2}l-b+1}^{3l-b} \widehat{\theta}_i - 
\sum\limits_{i=\frac{l}{2}-a+1}^{l-a}\theta_i  - \sum\limits_{i=\frac{5}{2}l-b+1}^{3l-b} \theta_i = 0.
\end{align}

To get the min-norm solution, we need to minimize the following Lagrangian multiplier
\[
\mathcal{L}(\widehat{\theta},\lambda_1, \lambda_2) = \sum\limits_{i=1}^{l-a}\widehat{\theta}_i^2 - 2\lambda_1 \langle \widehat{\theta} - \theta, e_{T^{(1)}} \rangle - 2\lambda_2 \langle \widehat{\theta} - \theta, e_{T^{(2)}} \rangle.
\]
To ensure the partial derivatives with respect to $\widehat{\theta}$ equal to 0, we obtain that 
\begin{align}
\label{eq:ex1_3}
\begin{aligned}
& \quad \widehat{\theta}_1 = \ldots = \widehat{\theta}_{\frac{l}{2}-a} = \widehat{\theta}_{2l+1} = \ldots = \widehat{\theta}_{\frac{5}{2}l - b} = \lambda_1, \\
& \quad \widehat{\theta}_{\frac{l}{2}+1} = \ldots = \widehat{\theta}_{l-a}
= \widehat{\theta}_{\frac{5}{2}l+1}= \ldots = \widehat{\theta}_{3l-b} = \lambda_2, \\
& \quad 
\widehat{\theta}_{\frac{l}{2}-a+1} = \ldots = \widehat{\theta}_{\frac{l}{2}} 
= \widehat{\theta}_{\frac{5}{2}l-b+1} = \ldots = \widehat{\theta}_{\frac{5}{2}l} = {\lambda_1+\lambda_2}, \\
& \quad
\widehat{\theta}_{l-a +1} = \ldots = \widehat{\theta}_{2l} = \widehat{\theta}_{3l - b +1} =\ldots = \widehat{\theta}_{4 l} = 0.
\end{aligned}
\end{align}

Adding Equations (\ref{eq:ex1_1})-(\ref{eq:ex1_3}), we have
\begin{align}
\label{eq:ex1_4}
(l+a+b)(\lambda_1+\lambda_2) = \sum\limits_{i=1}^{\frac{l}{2}}{\theta}_i 
+\sum\limits_{i=2l+1}^{\frac{5}{2}l} \theta_i 
+\sum\limits_{i=\frac{l}{2}-a+1}^{l-a}\theta_i 
+\sum\limits_{i=\frac{5}{2}l-b}^{3l-b} \theta_i.
\end{align}

Thus, we conclude that
\begin{equation*}
\begin{aligned}
& \quad \left[\sum\limits_{i=1}^{l} \widehat{\theta}_i - \sum\limits_{i=1}^{l} \theta_i\right]^2  \\
=& \quad  \left[(\frac{l}{2}-a)(\lambda_1+\lambda_2) + a\lambda_1 + a\lambda_2 - \sum\limits_{i=1}^{l}\theta_i\right]^2 \\
= & \quad \left[\frac{l}{2}(\lambda_1+\lambda_2)- \sum\limits_{i=1}^{l} \theta_i\right]^2 \\
= & \quad \left[\frac{\frac{l}{2}}{l+a+b}
\left(\sum\limits_{i=1}^{\frac{l}{2}}{\theta}_i 
+\sum\limits_{i=2l+1}^{\frac{5}{2}l} \theta_i 
+\sum\limits_{i=\frac{l}{2}-a+1}^{l-a}\theta_i 
+\sum\limits_{i=\frac{5}{2}l-b}^{3l-b} \theta_i\right)
-\sum\limits_{i=1}^{l} \theta_i\right]^2 \\
= & \quad \left[-\frac{\frac{l}{2}+a+b}{l+a+b}
\sum\limits_{i=1}^{\frac{l}{2}-a}\theta_i -\frac{\frac{l}{2}+a+b}{l+a+b} \sum\limits_{i=\frac{l}{2}+1}^{l-a}\theta_i -\frac{a+b}{l+a+b}\sum\limits_{i=\frac{l}{2}-a+1}^{\frac{l}{2}}\theta_i\right.\\
&\quad\quad + \left.\frac{\frac{l}{2}}{l+a+b}\sum\limits_{i=2l+1}^{\frac{5}{2}l}\theta_i  +\frac{\frac{l}{2}}{l+a+b}\sum\limits_{i=\frac{5}{2}l-b+1}^{3l-b}\theta_i \right]^2,
\end{aligned}
\end{equation*}
where the first equation follows from Equation (\ref{eq:ex1_3}) and the third equation follows from Equation (\ref{eq:ex1_4}).
Since each $\theta_i$ is i.i.d. drawn from the uniform distribution over $[0,1]$, we have
\begin{align*}
L_{a,b} =& \E\left[\langle \widehat{\theta} - \theta, e_q \rangle^2\right]\\
=& \E\left[\left[\sum\limits_{i=1}^{l} \widehat{\theta}_i - \sum\limits_{i=1}^{l} \theta_i\right]^2\right]\\
=& \frac{(\frac{l}{2}+a+b)^2(\frac{l}{2}-a)+ \frac{a(a+b)^2}2 + \frac{l^3}{8}+\frac{3(bl-a^2-ab)^2}{2} }
{6 (l+a+b)^2}.
\end{align*}
Thus, $L_{0,0} = \frac{l}{24}$.
When $a \leq l/4\leq b$, we have
\begin{align*}
L_{a,b} > & \quad \frac{3(bl-a^2-ab)^2/2}{6(l+a+b)^2} & \\
\geq & \quad\frac{(l^2/4 - 2 (l/4)^2)^2}{4 (2l)^2} & (a \leq l/4\leq b) \\
= & \quad \frac{(l^2/8)^2}{16 l^2} & \\
= & \quad \frac{l^2}{1024} & \\
\geq & \quad 4L_{0,0}. & (l \geq 200)
\end{align*}
This completes the proof.
\end{proof}

\subsection{Proof of Theorem \ref{thm:example_2}: justification example \rom{2}}
\label{sec:proof_2}

By symmetric, we fix $e_q = e_{[l]}$.
Like the proof of Theorem \ref{thm:example_1}, we first study the demonstration example sets, denoted by $S = \left\{s^{(1)}, s^{(2)}\right\}\subseteq D$, derived from \topk and \topkdiv.
We observe that for both algorithms, $|T^{(1)}\cap [l]| = |T^{(2)}\cap [l]| = l - 1$.
Note that this property for \topkdiv follows from the choice of $\alpha \geq 1-\frac{1}{l}$, which ensures that $|T^{(2)}\cap [l]| \leq l - 2$ can not achieve the minimum for
\[
\alpha\cdot\texttt{Similarity}(e,e_q) + (1-\alpha)\texttt{Diversity}(e,S)
\]
Thus, by symmetry, we can fix $T^{(1)} = [l - 1]\cup \left\{2l +1]\right\}$ and there are only three choices for $T^{(2)}$:
\begin{itemize}
    \item Case 1: $T^{(2)} = [l]\cup \left\{2l + 2\right\} - \left\{1\right\}$;
    \item Case 2: $T^{(2)} = [l]\cup \left\{2l + 1\right\} - \left\{1\right\}$.
    \item Case 3: $T^{(2)} = [l - 1]\cup \left\{2l + 2\right\}$.
\end{itemize}

We define the expected prediction loss of these three cases to be $L_1, L_2, L_3$, respectively.
By the definition of \topkdiv, we know that $L' = L_1$.
Moreover, the expected prediction loss $L$ of \topk must be a linear combination of $L_1, L_2, L_3$.
Thus, it suffices to prove that $L_2 > L_1$ and $L_3 > L_1$.
Below, we compute $L_1, L_2, L_3$ separately.

\paragraph{Computing $L_1$.} 
The computation idea is similar to that of Lemma \ref{lm:comparison_1}.
Suppose $\widehat{\theta}$ is the min-norm solution and we have
\[
\sum\limits_{i=1}^{l-1} \widehat{\theta}_i + \widehat{\theta}_{2l+1} - \sum\limits_{i=1}^{l-1} \theta_i - \theta_{2l+1} = 0 \text{ and } \sum\limits_{i=2}^{l} \widehat{\theta}_i +\widehat{\theta}_{2l+2} - \sum\limits_{i=2}^{l} \theta_i - \theta_{2l+2} = 0. 
\]
Again, consider the Lagrangian multiplier $\mathcal{L}(\widehat{\theta},\lambda_1,\lambda_2) =  \sum\limits_{i=1}^{l}
 (\widehat{\theta}_i)^2 - 2\lambda_1 \langle \widehat{\theta} - \theta, e_{T^{(1)}} \rangle - 2\lambda_2 \langle \widehat{\theta} - \theta, e_{T^{(2)}} \rangle$. 
To ensure the partial derivative w.r.t. $\widehat{\theta}$ equal to 0, we have 
\[\widehat{\theta}_2 = \widehat{\theta}_3 = \ldots = \widehat{\theta}_{l-1} = \lambda_1+\lambda_2, \text{ and } \widehat{\theta}_1 = \widehat{\theta}_{2l+1} = \lambda_1, \widehat{\theta}_l = \widehat{\theta}_{2l+2} = \lambda_2.
\]
Combining the above equations, we have 
\[
(2l-2) (\lambda_1+\lambda_2) = 2\sum\limits_{i=2}^{l-1} \theta_i + \theta_1 + \theta_l + \theta_{2l+1} + \theta_{2l+2}.
\]
Thus, 
\[
(\sum\limits_{i=1}^l \theta_i - \sum\limits_{i=1}^l \widehat{\theta}_i)^2  = [(l-1)(\lambda_1+\lambda_2) - \sum\limits_{i=1}^l \theta_i]^2 = (\frac{\theta_{2l+1}+\theta_{2l+2} - \theta_{1} - \theta_{l}}{2})^2.
\]
Consequently, we have
\[
L_1 = \E[\langle \widehat{\theta} - \theta, e_q \rangle^2] = \E[(\frac{\theta_{2l+1}+\theta_{2l+2} - \theta_{1} - \theta_{l}}{2})^2] = \frac{1}{12}. 
\]

\paragraph{Computing $L_2$.}
Similarly, we have
\[
\sum\limits_{i=1}^{l-1} \widehat{\theta}_i + \widehat{\theta}_{2l+1} - \sum\limits_{i=1}^{l-1} \theta_i - \theta_{2l+1} = 0, \text{ and } \sum\limits_{i=2}^{l} \widehat{\theta}_i +\widehat{\theta}_{2l+1} - \sum\limits_{i=2}^{l} \theta_i - \theta_{2l+1} = 0. 
\]
Thus, using the Lagrangian multiplier, we obtain that 
\[
\widehat{\theta}_2 = \widehat{\theta}_3 = \ldots = \widehat{\theta}_{l-1} = \widehat{\theta}_{2l+1} = \lambda_1 + \lambda_2, \text{ and } \widehat{\theta}_{1} = \lambda_1, \widehat{\theta}_{l} = \lambda_2.
\]
Combining the above equations, we have
\[
(2l-1) (\lambda_1+\lambda_2) = 2\sum\limits_{i=2}^{l-1} \theta_i + 2\theta_{2l+1} + \theta_1+\theta_{l}.
\]
Thus,
\begin{equation*}
\begin{aligned}
L_2 = & \quad \E[(\sum\limits_{i=1}^l \theta_i - \sum\limits_{i=1}^l \widehat{\theta}_i)^2]  = 
\E[\sum\limits_{i=1}^l \theta_i - [(l-1)(\lambda_1+\lambda_2) ]^2] \\
= & \quad  \E[\frac{1}{2l-1}\sum\limits_{i=2}^{l-1}\theta_i + \frac{l}{2l-1}(\theta_1 +\theta_l) - \frac{2l-2}{2l-1} \theta_{2l+1}]^2] =\frac{9l^2-7l+2}{12(12l-1)^2} > L_1.
\end{aligned}
\end{equation*}

\paragraph{Computing $L_3$.}
Similarly, we have
\[
\sum\limits_{i=1}^{l-1} \widehat{\theta}_i + \widehat{\theta}_{2l+1} - \sum\limits_{i=1}^{l-1} \theta_i - \theta_{2l+1} = 0, \text{ and } \sum\limits_{i=1}^{l-1} \widehat{\theta}_i +\widehat{\theta}_{2l+2} - \sum\limits_{i=1}^{l-1} \theta_i - \theta_{2l+2} = 0. 
\]
Using the Lagrangian multiplier, we obtain that
\[
\widehat{\theta}_1 = \widehat{\theta}_2 = \widehat{\theta}_3 = \ldots = \widehat{\theta}_{l-1}=\lambda_1+\lambda_2, \text{ and } \widehat{\theta}_{2l+1} = \lambda_1,  
 \widehat{\theta}_{2l+2} = \lambda_2.
\]
Combining the above equations, we have 
\[
(2l-1) (\lambda_1+\lambda_2) = 2\sum\limits_{i=1}^{l-1} \theta_i + \theta_{2l+1} + \theta_{2l+2}.
\]
Thus,
\begin{align*}
L_3 = & \quad \E[(\sum\limits_{i=1}^l \theta_i - \sum\limits_{i=1}^l \widehat{\theta}_i)^2]  = 
\E[\sum\limits_{i=1}^l \theta_i - [(l-1)(\lambda_1+\lambda_2) ]^2]\\
= & \quad \E[\frac{1}{2l-1}\sum\limits_{i=1}^{l-1}\theta_i + \theta_l - \frac{l-1}{2l-1}(\theta_{2l+1}+\theta_{2l+2})]^2] = \frac{9l^2-7l+2}{12(2l-1)^2} > L_1.
\end{align*}

Overall, we complete the proof of Theorem \ref{thm:example_2}.

\section{Simulation} \label{sec:simulation}

We validate the superiority of \topkdiv compared to \topk in more general settings than that in Theorems \ref{thm:example_1} and \ref{thm:example_2}.

\subsection{Experiment settings}

We consider the ID setting with $\gD_\gE = \gQ_\gE$.

\paragraph{Metric for coverage.} 
Given a sample $(e,y_E)$, let $T^{(e)}$ denote the non-zero entry set of $e$. 
Given a demonstration example set $S\subseteq D$ and a query $e_q$, we define the coverage ratio of $S$ with respect to $e_q$ to be: 
\[
r_S(e_q) := \frac{|(\bigcup_{e\in S} T^{(e)})\cap T^{(e_q)}|}{|T^{(e_q)}|},
\]
i.e., the ratio of non-zero entries of $e_q$ covered by samples in $S$.
By definition, $r_S(e_q) \in [0,1]$ and a larger $r_S(e_q)$ represents higher coverage.
Specifically, when $r_S(e_q) = 1$, we say $e_q$ is fully covered by $S$.
Moreover, given a method $\gA$ that generates a demonstration example set $A(e_q) \subseteq D$ for each query $e_q$, we define
\begin{align}\label{eq:coverage}
r(\gA) := \E_{e_q\sim \gQ_\gE}[r_{\gA(e_q)}(e_q)]
\end{align}
to be the expected value of its coverage ratio $r_{\gA(e_q)}(e_q)$.
If $r(\gA) = 1$, we say every query is fully covered by $\gA$.

We want to study the loss difference between \topkdiv and \topk under two scenarios: 1) when query $e_q$ is fully covered by both algorithms \topkdiv and \topk, i.e., $r(\topkdiv) = r(\topk) = 1$; and 2) when the coverage ratio of \topkdiv is smaller than that of \topk, i.e., $r(\topkdiv) < r(\topk)$.

\paragraph{Parameters.}
Let $d=200$. 
Let $l$ vary from $3,4,8$.
Let $K = 4$ or $8$.
Let $\gD_\gE = \gQ_\gE$ be the distribution that first samples a subset $T\subset [d]$ of size $l$ and then generate $e_T$.
We set the size of training set $D$ to be $|D| = d \times\text{train\_scale}$, where $\text{train\_scale} \in \{1,5,10\}$.

For each pair $(l, K)$, we generate a testing set $D_{\text{test}}$ of size 100.
We ensure that $D_{\text{test}}\cap D = \emptyset$.
We report the expected prediction loss and coverage ratio of \topk and \topkdiv for each pair $(l, K)$.
%


\begin{table*}[htbp]
    \centering
    \small
    \caption{(\textbf{Simulation of the min-norm solution)} 
``Coverage'' represents the coverage ratio of methods, defined as in \Cref{eq:coverage}. For each random seed, we selected one hundred test samples. We report the average results across 3 different random seeds for each metric. }
    \label{tab:simulation}
    \begin{adjustbox}{width=\textwidth}
    \begin{tabular}{c|c|c|ccc|ccc|ccc}
    \toprule
    \multirow{2}{*}{Method} & \multirow{2}{*}{Shot} & \multirow{2}{*}{Metric} & \multicolumn{3}{c|}{Train scale = 1} & \multicolumn{3}{c|}{Train scale = 5} & \multicolumn{3}{c}{Train scale = 10} \\
    \cmidrule(lr){4-6} \cmidrule(lr){7-9} \cmidrule(lr){10-12}
    & & & $l = 3$ & $l = 4$ & $l = 8$ & $l = 3$ & $l = 4$ & $l = 8$ & $l = 3$ & $l = 4$ & $l = 8$ \\
    \midrule
    \multirow{4}{*}{TopK} & \multirow{2}{*}{$K=4$} 
    & Loss & 0.21 & 0.31 & 12.70 & 0.15 & 0.30 & 9.55 & 0.19 & 0.30 & 7.51 \\
    & & Coverage & 1.00 & 1.00 & 0.55 & 1.00 & 1.00 & 0.61 & 1.00 & 1.00 & 0.66 \\
    \cmidrule(lr){2-12}
    & \multirow{2}{*}{$K=8$}
    & Loss & 0.47 & 0.57 & 3.09 & 0.43 & 0.84 & 1.33 & 0.45 & 0.83 & 1.19 \\
    & & Coverage & 1.00 & 1.00 & 0.75 & 1.00 & 1.00 & 0.80 & 1.00 & 1.00 & 0.81 \\
    \midrule
    \multirow{4}{*}{TopK-Div} & \multirow{2}{*}{$K=4$} 
    & Loss & 0.19 & 0.32 & 10.25 & 0.18 & 0.31 & 5.47 & 0.21 & 0.29 & 3.97 \\
    & & Coverage & 1.00 & 1.00 & 0.63 & 1.00 & 1.00 & 0.75 & 1.00 & 1.00 & 0.80 \\
    \cmidrule(lr){2-12}
    & \multirow{2}{*}{$K=8$}
    & Loss & 0.31 & 0.38 & 2.58 & 0.23 & 0.38 & 1.32 & 0.20 & 0.38 & 1.75 \\
    & & Coverage & 1.00 & 1.00 & 0.87 & 1.00 & 1.00 & 0.94 & 1.00 & 1.00 & 0.94 \\
    \bottomrule
    \end{tabular}
    \end{adjustbox}
\end{table*}

\subsection{Result and discussions}
The results, reported in Table \ref{tab:simulation}, reveal key insights into the performance differences between \topk\ and \topkdiv.  
We observe that when \( l = 8 \), the coverage ratio of \topk\ is lower than that of \topkdiv, while its loss is significantly higher. 
For example, when \( l = 8 \), \( K = 4 \), and \(\text{train\_scale} = 5\), the coverage ratio is \( r(\topk) = 0.61 \), compared to \( r(\topkdiv) = 0.75 \), while the loss for \topk\ is 9.55, notably larger than the 5.47 observed for \topkdiv. 
This demonstrates that incorporating diversity can reduce prediction loss by improving coverage, aligning with Theorem \ref{thm:example_1}.  

When \( l = 3 \) or \( 4 \), the coverage ratios of \topk\ and \topkdiv\ are both 1. 
We find that the loss of \topk\ is comparable to or even lower than that of \topkdiv\ when \( K = 4 \), but significantly higher when \( K = 8 \), across various training scales. 
For instance, when \( l = 3 \), \( K = 8 \), and \(\text{train\_scale} = 5\), the loss for \topk\ is 0.43, whereas for \topkdiv\ it is 0.23. 
This supports our findings in Theorem \ref{thm:example_2}, demonstrating that diversity can enhance in-context learning beyond just coverage.  
The inverse trend in loss between \( K = 4 \) and \( K = 8 \) suggests that increasing coverage is beneficial when the query is not fully covered but becomes redundant when the demonstration example set already provides sufficient coverage.

\end{document}